\definecolor{newcolor}{rgb}{.8,.349,.1}
\newtheorem{thm}{Theorem}[section]
\newtheorem{prop}[thm]{Proposition}
\newtheorem{rem}{Remark}
\newtheorem{ass}{Assumption}
\newcommand{\LS}{\mathrm{LS}}
\newcommand{\dist}{\mathrm{dist}}
\newcommand{\vzero}{\mathbf{0}}
\begin{document}
\title{Unsupervised Deep Learning Meets Chan-Vese Model}


\author[D. Zheng et~al.]{Dihan Zheng\affil{1},
   Chenglong Bao\affil{1,2}\comma\corrauth, Zuoqiang Shi\affil{3,2}, Haibin Ling\affil{4}~and Kaisheng Ma\affil{5}}
\address{
	\affilnum{1}\ Yau Mathematical Sciences Center, Tsinghua University, China. \\
   	\affilnum{2}\ Yanqi Lake Beijing Institute of Mathematical Sciences and Applications, China. \\
   	\affilnum{3}\ Department of Mathematical Sciences, Tsinghua University, China. \\
   	\affilnum{4}\ Department of Computer Sciences, Stony Brook University, USA. \\
   	\affilnum{5}\ Institute for Interdisciplinary Information Sciences, Tsinghua University, China. \\
   }
\emails{{\tt clbao@mail.tsinghua.edu.cn} (C.~Bao)}

\begin{abstract}
The Chan-Vese (CV) model is a classic region-based method in image segmentation. However, its piecewise constant assumption does not always hold for practical applications. Many improvements have been proposed but the issue is still far from well solved. In this work, we propose an unsupervised image segmentation approach that integrates the CV model with deep neural networks, which significantly improves the original CV model's segmentation accuracy. Our basic idea is to apply a deep neural network that maps the image into a latent space to alleviate the violation of the piecewise constant assumption in image space. We formulate this idea under the classic Bayesian framework by approximating the likelihood with an evidence lower bound (ELBO) term while keeping the prior term in the CV model. Thus, our model only needs the input image itself and does not require pre-training from external datasets. Moreover, we extend the idea to multi-phase case and dataset based unsupervised image segmentation. Extensive experiments validate the effectiveness of our model and show that the proposed method is noticeably better than other unsupervised segmentation approaches.
\end{abstract}

\ams{68U10, 68T45
}
\keywords{Image segmentation, Chan-Vese model, Variational inference, Unsupervised learning}
\maketitle

\section{Introduction}
Image segmentation is one of the fundamental problems in image processing and has many applications in computer vision such as object detection~\cite{delmerico2011building}, recognition~\cite{tu2005image} and medical image analysis~\cite{pham2000current}. Despite great improvements in image segmentation in recent years, it remains challenging and deserves further exploration. Specifically, given an open and bounded domain $\Omega\subset\mathbb{R}^2$ and an image $I:\Omega\rightarrow \mathbb{R}$, segmentation aims at finding a decomposition of the region $\Omega=(\cup_{i=1}^N\Omega_{i})\cup \Gamma$, where $\Gamma$ is the closed segmentation curve and $\Omega_i, i=1,2,\ldots,N$ are disjoint open regions of interests. One seminar work is the so-called Mumford-Shah model~\cite{mumford1989optimal} that minimizes the following functional:
\begin{equation}\label{Energy:MS}
	E_{\mathrm{MS}}(J, \Gamma) := \int_{\Omega}(I-J)^{2}dx
	+\mu\int_{\Omega \setminus \Gamma}|\nabla J|^{2}dx +\nu |\Gamma|,
\end{equation}
where $J$ is a piecewise smooth approximation of $I$ and $|\Gamma|$ is the length of $\Gamma$. Here, $\mu$ and~$\nu$ are two positive constants and $|\Gamma|$ can be written as $\mathcal{H}^1(\Gamma)$, which is the 1-dimensional Hausdorff measure. The difficulty in studying~\eqref{Energy:MS} is that it involves two unknowns $J$ and $\Gamma$ of different natures: $J$ is a function defined on a 2-dimensional space, while $\Gamma$ is a 1-dimensional set. It is not easy to minimize the Mumford-Shah functional $E_{\mathrm{MS}}$ and the simplified Chan-Vese (CV) model~\cite{chan2001active} is proposed by using the piecewise constant assumption and thus the functional is reduced to:
\begin{equation}
	\label{Chan-Vese}
	\begin{aligned}
		E_{\mathrm{CV}}(c_1, c_2, \Omega)=\int_{\Omega_1}(I-c_1)^2dx  +\int_{\Omega\setminus\Omega_1}(I-c_2)^2dx + \nu|\partial \Omega_1|,
	\end{aligned}
\end{equation}
where $c_1,c_2$ are constants for foreground (fg) and background (bg) respectively and $\partial \Omega_1$ is the boundary of $\Omega_1$. Throughout this paper, we called $\Omega_1$ as the fg and $\Omega\setminus\Omega_1$ as the bg. Also, under the maximum a posterior (MAP) framework, the Chan-Vese model~\eqref{Chan-Vese} is derived in~\cite{cremers2007review} by assuming that fg/bg are random variables that generated from two Gaussian distributions and the prior distribution of the boundary is the length regularization term in~\eqref{Chan-Vese}. The Gaussian distribution assumptions for fg/bg are key factors for the success of segmentation but they do not hold for complex scenes. In Figure~\ref{problem}, we illustrate two typical cases: (i) The fg or bg does not satisfy the Gaussian distribution hypothesis (see the first row in Figure~\ref{problem} (b)); (ii) the distributions of fg and bg have been significantly overlapped (see the second row in Figure~\ref{problem} (b)).
The results of the CV model are present in Figure~\ref{problem}~(c). It is clear that the CV model fails in both cases due to the violation of its basic assumption. To widen the CV model's application range, there is a need to construct a more accurate model applicable to complex scenes.
\begin{figure}
	\centering
	\begin{tabular}{c@{\hspace{0.01\linewidth}}c@{\hspace{0.01\linewidth}}c@{\hspace{0.01\linewidth}}c@{\hspace{0.01\linewidth}}c@{\hspace{0.01\linewidth}}c}
		\includegraphics[width=0.15\linewidth,height=0.12\linewidth]{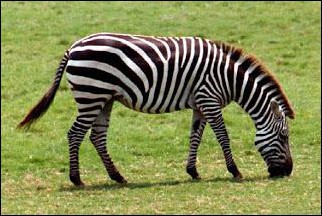} &
		\includegraphics[width=0.15\linewidth,height=0.12\linewidth]{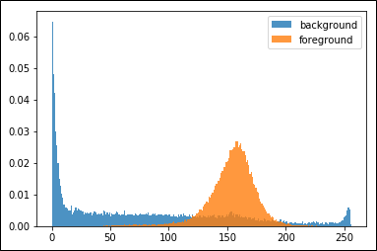} &
		\includegraphics[width=0.15\linewidth,height=0.12\linewidth]{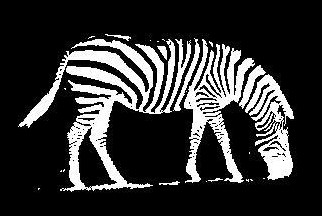} &
		\includegraphics[width=0.15\linewidth,height=0.12\linewidth]{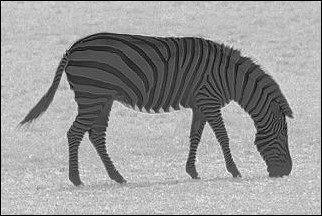} &
		\includegraphics[width=0.15\linewidth,height=0.12\linewidth]{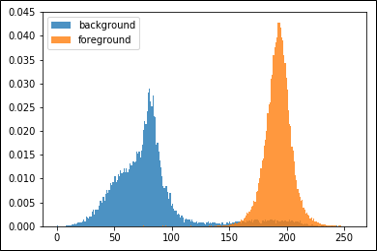} &
		\includegraphics[width=0.15\linewidth,height=0.12\linewidth]{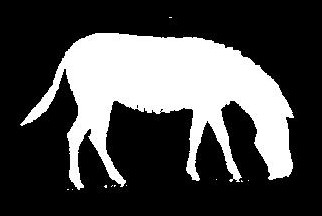} \\
		\includegraphics[width=0.15\linewidth,height=0.12\linewidth]{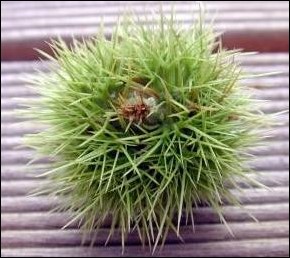} &
		\includegraphics[width=0.15\linewidth,height=0.12\linewidth]{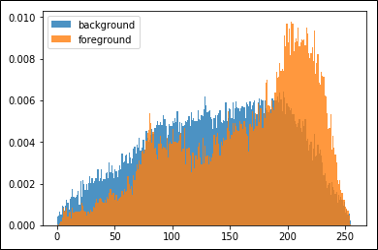} &
		\includegraphics[width=0.15\linewidth,height=0.12\linewidth]{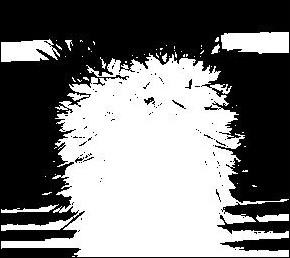}&
		\includegraphics[width=0.15\linewidth,height=0.12\linewidth]{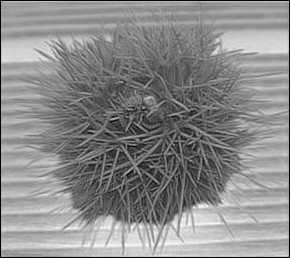} &
		\includegraphics[width=0.15\linewidth,height=0.12\linewidth]{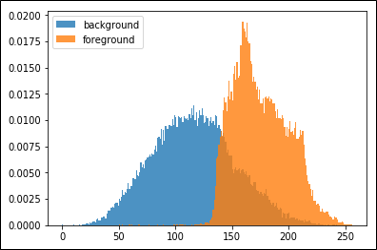} &
		\includegraphics[width=0.15\linewidth,height=0.12\linewidth]{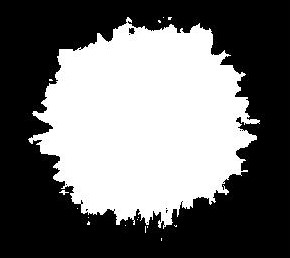}\\
		{\small{(a)}} & {\small{(b)}} & {\small{(c)}} & {\small{(d)}} & {\small{(e)}} & \small{(f)}
	\end{tabular}
	\caption{(a) Original image. (b) Foreground/background (fg/bg) distribution in image space. (c) Segmentation result by the CV model \eqref{Chan-Vese}. (d) Latent space representations of the original image. (e) Fg/bg distributions in latent spaces. (f) Our segmentation results.}
	\label{problem}
\end{figure}

In recent years, deep learning methods have achieved state-of-the-art performance in image segmentation~\cite{long2015fully,he2017mask}, which trains a deep neural network from a set of training samples. However, their performance heavily depends on a large number of high-quality training samples in which each pixel has a label. In practice, the labeling process is time-consuming~\cite{zhang2018collaborative} and need many human efforts, especially in domain-specific applications such as medical images and seismic data. To relax the supervision requirements, some recent works utilize weakly or semi-supervised image segmentation, e.g., relaxations of the pixel-level annotations to image level~\cite{papandreou2015weakly} or bound box level~\cite{dai2015boxsup}. These methods still need many training pairs to achieve good performance such that sufficiently many complex scenes are covered. Therefore, further relaxing the supervision requirements to the unsupervised setting has its own practical and scientific values and deserves to be studied. One recent unsupervised learning method uses deep image prior~\cite{ulyanov2018deep} for image recovery and decomposition problems. Yet, it suffers from the overfitting problem, and the result is inferior to traditional methods in some tasks, e.g., image denoising.
Besides the dataset dependency, the supervised networks may not be stable and suffer from adversarial perturbations, which cause false predictions~\cite{szegedy2013intriguing}. Meanwhile, the trained networks' generalization problem exists when the scene is complicated, and the object is not contained in the training set. More importantly, due to the "black box" property of deep neural networks, it is difficult to analyze the internal mechanism, especially for the failure cases. On the contrary, the traditional model based approaches have a clear and rigorous mathematical foundation. Thus, combining traditional methods with deep learning approaches and fully exploring both advantages is important for unsupervised image segmentation.
\begin{figure}
	\begin{center}
		\includegraphics[width=0.9\linewidth]{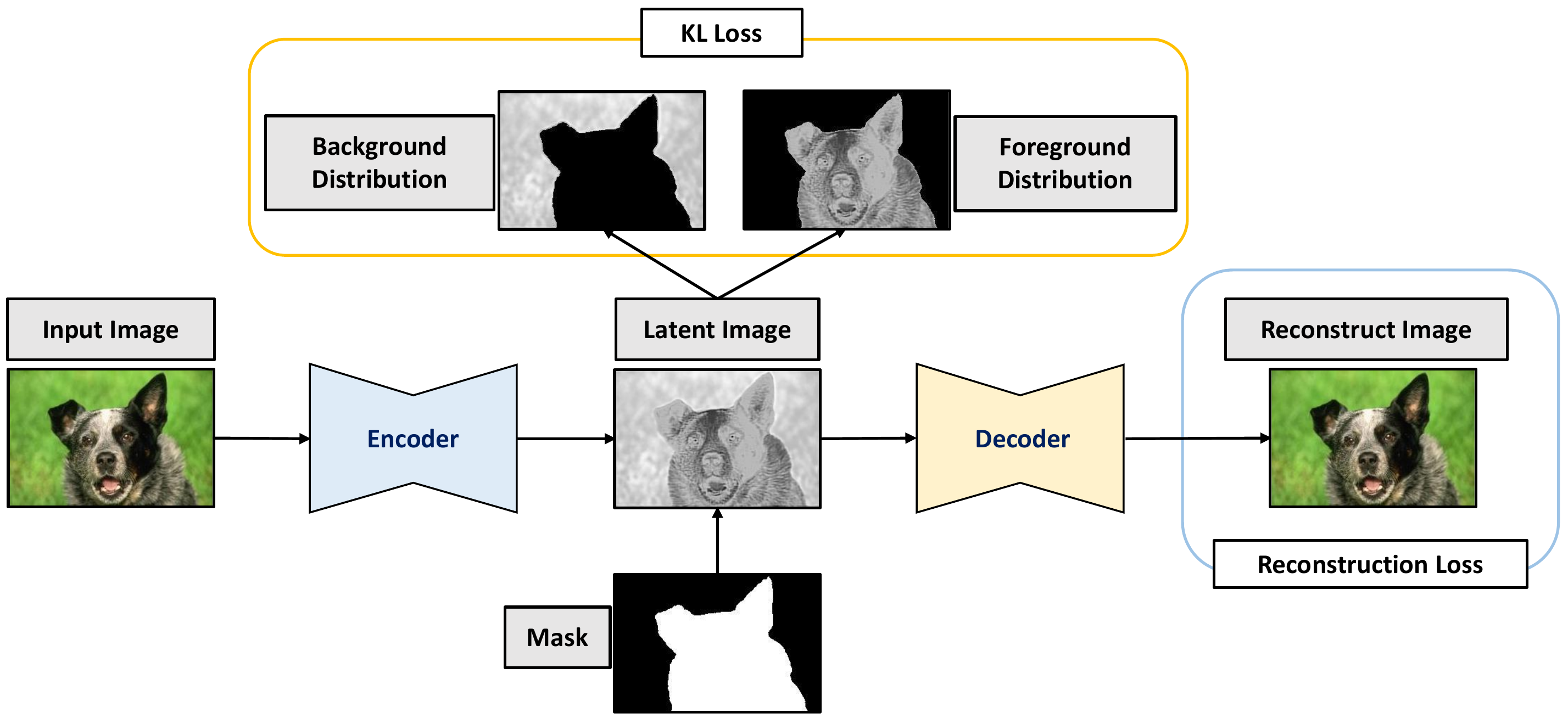}
	\end{center}
	\caption{The proposed unsupervised image segmentation framework. The input image is transformed by the encoder to the latent space, where the mask separates the latent image into the given foreground and background distribution. The decoder recovers the original image from the latent image.}
	\label{fig:workflow}
\end{figure}

\subsection{Summary of Contributions}
Motivated by the above analysis, our work aims at promoting the unsupervised image segmentation framework by combining the deep neural networks with the traditional maximum a posterior (MAP) framework. The basic idea is to map the input image to a latent image using a deep neural network. The fg/bg distributions are simplified in the latent space such that the CV model is applicable. In Figure~\ref{problem} (d)-(f), it shows the latent representations for fg/bg and validates our basic idea, which leads to more accurate segmentation results than CV model. 

Formally, we formulate our idea in a classic MAP framework that desires to maximize the posterior distribution. It is known that the posterior is proportion to the product of the likelihood and prior distribution. In the CV model~\eqref{Chan-Vese}, the prior distribution is an exponential function for the length of the segmentation curve. The likelihood assumes to be Gaussian distribution for fg and bg, which is not always accurate in practice. To address the above problem, we assume that each region corresponds to a latent variable that satisfies the Gaussian distribution and derive an evidence lower bound (ELBO) term to approximate the likelihood. The ELBO term is derived from the Variational Auto-Encoder (VAE)~\cite{kingma2013auto} that contains encoder and decoder networks. The above two networks build up the connections between image space and latent space and contains two terms: the Kullback-Leibler (KL) divergence loss and the reconstruction loss. More specifically, the KL loss represents the distance between the latent representations of foreground and background and the Gaussian distribution, while the reconstruction loss maintains the connection between image space and latent space. Figure~\ref{fig:workflow} demonstrates the workflow of our model. Replacing the likelihood in MAP with the derived ELBO, we can derive an unsupervised loss. Each term can be explained as model-driven approaches rather than designing by heuristics. Thus, our model does not need any pre-training or training samples except for the input image itself. We summarize our contributions as follows.

\begin{itemize}
	\item An unsupervised image segmentation method that integrates deep neural networks with the CV model is proposed by coupling each region with a latent variable. Using the expressive power of deep neural networks, the input image is mapped into a latent space in which the piecewise constant assumption for the fg and bg holds, thus significantly improving the CV model's segmentation accuracy. Also, a convergent numerical algorithm is proposed for solving the resulted non-convex and non-smooth optimization problem.
	\item Based on the MAP framework, we derive an ELBO term for approximating the likelihood, which naturally improves each term's explainability in our loss function and makes our network diagnosable. Moreover, this method's idea can be extended to multi-phase image segmentation and dataset based image segmentation problems.
	\item Extensive numerical experiments validate our method's robustness. They show that the proposed model improves the CV model's segmentation accuracy by a large margin and outperforms other unsupervised learning or classical approaches.
\end{itemize}

The rest of the paper is organized as follows. The related work is present in section~\ref{rel_work}. In section~\ref{proposed_method}, we derive the proposed model, which is based on the traditional CV model. We then give a numerical algorithm to solve this model and show that the proposed algorithm converges. In section~\ref{extensions}, we extend this idea to multi-phase segmentation, and dataset based segmentation tasks. In section~\ref{experiments}, we compare our two-phase segmentation method with other four algorithms~\cite{chan2001active,cremers2007review,gandelsman2018double,peng2016salient} on the Weizmann dataset~\cite{AlpertGBB07}, compare our multi-phase segmentation method with the Double-DIP~\cite{gandelsman2018double}, the Normal Cut method~\cite{shi2000normalized}, and compare our dataset based segmentation with the CV model~\cite{chan2001active} and one Generative Adversarial Network (GAN) based model ReDO~\cite{chen2019unsupervised}. In section~\ref{discussion}, we discuss the performance of the proposed method from six perspectives. Conclusions are given in section~\ref{conclusion}.

\section{Related Work}\label{rel_work}
The image segmentation problem is important, and there are many works in this direction. In this section, we briefly review the papers that are closely related to this work.

\noindent{\bf{Active contour-based segmentation methods.}} An early attempt in image segmentation is the Snakes model, which used PDEs to find the optimal segmentation curve~\cite{kass1988snakes}. The Snakes model belongs to edge models, which heavily rely on the choice of initial curves. The level-set method~\cite{sethian1996fast} is an alternative method for representing the segmentation region and can handle the topological change of the segmentation curve. The geodesic active contours method is proposed in~\cite{caselles1993geometric}, but it is sensitive for initialization and has many local minima. The Chan-Vese model~\cite{chan2001active} is a region-based segmentation model, and the statistical understandings for these region-based models are given in~\cite{cremers2007review}. To avoid from getting stuck in local minima, convex relaxation approaches~\cite{bresson2007fast,pock2009convex,chan2006algorithms,cai2013two} and the graph cut method~\cite{grady2008reformulating} are proposed for improving the convergence behavior. The LBF model~\cite{li2008minimization} is proposed for intensity inhomogeneous images. Convex shape prior is introduced in~\cite{chan2005level,luo2019convex}. Unlike the above work, our method is a VAE based method that takes advantage of deep networks' high expressive power to model complex scenes, leading to better segmentation results.

\noindent{\bf Variational Auto-Encoders (VAEs).} VAE is an important generative model based on variational inference~\cite{kingma2013auto}. A lower bound of the original distribution is estimated by introducing a latent space, which contains random variables following simple distribution. Deep neural networks parameterize the relationship between latent space and original space. There have been many signs of progress along this direction in image processing, such as image denoising~\cite{prakash2020fully}, image compression~\cite{zhou2018variational}, and image super-resolution~\cite{wang2019variational}. However, these methods need a large number of training samples to approximate accurate distribution. Our work is an early attempt at using VAEs for unsupervised single image segmentation to the best of our knowledge.

\noindent{\bf Deep image prior.} Deep image prior~\cite{ulyanov2018deep} explicitly represents the real images by deep neural networks with low dimensional random input. It is shown that deep image prior can be applied for many image processing tasks such as image denoising, inpainting, and super-resolution. Since deep image prior does not need external learning, it inspires many follow-ups works. Gandelsman et al.~\cite{gandelsman2018double} propose a double deep image prior method which uses two deep neural networks with different regularization for representing two layers for given images. It shows good performance in many image decomposition tasks, including segmentation, dehazing, and transparent layer separation. However, the objective function in these methods is heuristically designed and lacks mathematical understanding. Instead, our model is constructed from the traditional MAP framework, which improves each term's interpretability in the object function.

\section{The Proposed Approach}\label{proposed_method}
In this section, we firstly review the statistical model for segmentation~\cite{cremers2007review} and then propose our method for image segmentation based on VAEs. For the sake of simplicity, we represent the $I$ as a matrix in $\mathbb{R}^{n\times m}$, and denote $I_x=I(x)$, where $x$ runs for the set of matrix indexes $\Omega=\{(i,j)\mid i=1,\dots,n;j=1,\dots,m\}$.

\subsection{Statistical Model for Image Segmentation}
Given an image $I \in \mathbb{R}^{n\times m}$, the segmentation problem can be formulated to maximize the posterior probability $p(\mathcal{P}(\Omega)\mid I)$ where $\mathcal{P}(\Omega)$ denotes a partition of $\Omega$, i.e., there exist $N$ non-overlapped sub-regions $\Omega_i$ such that $\Omega=\cup_{i=1}^N\Omega_i$. Using the Bayesian rule, the posterior is 
\begin{equation}\label{model:deepCV}
	p(\mathcal{P}(\Omega)\mid I)\propto p(I\mid \mathcal{P}(\Omega))p(\mathcal{P}(\Omega)),
\end{equation}
where $p(I\mid \mathcal{P}(\Omega))$ is the likelihood and $p(\mathcal{P}(\Omega))$ is the prior information. Specifically, in binary segmentation, there are only two phases $\Omega_1$, $\Omega_2$, we introduce a binary valued segmentation mask $u\in\{0,1\}^{m\times n}$ to represent the regions, i.e.,
\begin{equation}
	\Omega_1 = \{x\in\Omega\mid u_x=1\}, \quad \Omega_2 = \{x\in\Omega\mid u_x=0\}.
\end{equation}
Thus, the model \eqref{model:deepCV} is reduced to \begin{equation}\label{model:graph}
    p(u \mid I) \propto p(I \mid u)p(u).
\end{equation} 
The term $p(u)$ is the prior, and there are many types of prior knowledge of the segmentation curves such as shape and length~\cite{luo2019convex}. In CV model~\cite{chan2001active}, $p(u)$ is chosen as
\begin{equation}\label{prior-cv}
	p(u)  = \prod_{x\in\Omega} p(u_x),\quad p(u_x)
	\propto \exp(- \nu \|\nabla u_x\|_2),
\end{equation}
where $\|\nabla u_{i,j}\|_2 = \sqrt{(u_{i,j}-u_{i-1,j})^2+(u_{i,j}-u_{i,j-1})^2}$. Under the independent assumption of each region and pixels, the  first term in \eqref{model:graph} is
\begin{equation}\label{likelihood_1}
    p(I\mid u) = \prod_{x\in\Omega}p(I_x\mid u_x).
\end{equation}
Let $\mathcal{N}(0,1)$ to be the standard Gaussian distribution and set 
\begin{equation}\label{Gaussian_ass}
    p(I_x\mid u_x) = 
    \begin{cases}
    \mathcal N(c_1,1), & \mbox{ if } u_x = 1, \\
    \mathcal N(c_2,1), & \mbox{ if } u_x = 0,
    \end{cases}
\end{equation}
combing~\eqref{Gaussian_ass} with~\eqref{prior-cv}, we obtain the energy function of the discretized CV model
\begin{equation}
	\label{Chan-Vese-LevelSet-alt}
	\begin{aligned}
		E_{\mathrm{CV}}(c_1, c_2, u)&=\sum_{x\in\Omega} \nu \|\nabla u_x\|_2 + u_x(I_x-c_1)^2 + (1-u_x)(I_x-c_2)^2.
	\end{aligned}
\end{equation}
It is clear that the Gaussian assumption~\eqref{Gaussian_ass} does not always hold in practice, and thus one extended version of the CV model is imposing more complex assumptions on $I_x\mid u_x$ such as the mixture of Gaussian~\cite{fu2012color}. However, it is not easy to find a universal method for modeling the likelihood accurately in practical applications. To address this problem, we apply the deep neural network-based generative models that approximate the likelihood term $p(I\mid u)$. 

\subsection{The Deep CV Model}\label{deep_cv}
In this subsection, we propose an unsupervised image segmentation framework by using the VAE~\cite{kingma2013auto}, which is a powerful generative model. To approximate the likelihood $p(I \mid u)$, we associate $I$ with a latent variable $Z \in \mathbb{R}^{n\times m\times d}$, where $d$ is the dimension of the latent space, and impose the Gaussian assumptions on the latent variable~$Z$, i.e., \
\begin{equation}\label{ass1}
	p(Z\mid u)=\prod_{x\in\Omega}p(Z_x\mid u_x), \quad 
	 p(Z_x\mid u_x) = 
    \begin{cases}
    \mathcal N(\mu_1, \Sigma_1), & \mbox{ if } u_x = 1, \\
    \mathcal N(\mu_2, \Sigma_2), & \mbox{ if } u_x = 0,
    \end{cases}
\end{equation} 
where $\mu_i \in \mathbb{R}^d$ is the first order moment, and $\Sigma_i$ is the covariance matrix. Moreover, the likelihood $p(I\mid u)$ has the relationship:
\begin{equation}\label{LowerBound}
	\begin{aligned}
		\ln p(I\mid u) = \ln \int p(I, Z\mid u) dZ 
		& =  \ln \int \frac{p(I, Z\mid u)}{q(Z\mid I)} q(Z\mid I) dZ\\
		& \geq \int \ln\left(\frac{p(I, Z\mid u)}{q(Z\mid I)}\right) q(Z\mid I) dZ,
	\end{aligned}
\end{equation}
where the last inequality follows from the Jensen's inequality. Define the evidence of lower bound (ELBO) as the right hand side in \eqref{LowerBound}, i.e.\ 
\begin{equation}
	\mathrm{ELBO} = \int \ln\left(\frac{p(I, Z\mid u)}{q(Z\mid I)}\right) q(Z\mid I) dZ,
\end{equation}
it provides a lower bound for the log-likelihood $\ln p(I\mid u)$ which is originally given in~\cite{kingma2013auto}. By direct calculation, the ELBO term is equal to
\begin{equation}\label{ELBO}
	E_{q(Z \mid I)}\ln (p(I\mid Z,u))-\mathrm{KL}(q(Z\mid I)\|p(Z\mid u)),
\end{equation}
where $\mathrm{KL}$ is the Kullback–Leibler (KL) divergence. Choosing 
\begin{equation}\label{ass2}
	\begin{aligned}
		&p(I\mid Z,u) = \prod_{x\in\Omega} p(I_x\mid Z) = \prod_{x\in\Omega} \mathcal{N}(\mathcal{F}(Z)_x, 1), \\
		&q(Z\mid I) = \prod_{x\in\Omega} q(Z_x\mid I) = \prod_{x\in\Omega} \mathcal{N}(\mathcal{G}^{\mu}(I)_x,\mathcal{G}^{\sigma}(I)_x),
	\end{aligned}
\end{equation}
where $\mathcal{F}$ and $\mathcal{G}=\left(\mathcal{G}^{\mu},\mathcal{G}^{\sigma}\right)$ are the decoder map and encoder map, respectively. The next proposition shows that~\eqref{ELBO} has an analytical form.
\begin{prop}\label{MainProp}
	Let $u$ be a binary function on $\Omega$, we set $\Omega_1 = \{x\mid u_x = 1\}$ and $\Omega_2=\{x\mid u_x = 0\}$. If $Z$ satisfies~\eqref{ass1} and choose $p(I\mid Z,u)$ and $q(Z\mid I)$ as~\eqref{ass2}, then the~{\rm{ELBO}} in~\eqref{ELBO} is equal to
	\begin{equation}\label{ELBO1}
	    -\frac{1}{2} E_{\eta}\|\mathcal{F}(\mathcal{G}^{\mu}(I)+ \sqrt{\mathcal{G}^{\sigma}(I)}\eta)-I\|^2 - \sum_{x\in \Omega} \left( u_x\mathrm{KL}_x^{\Omega_1} + (1-u_x)\mathrm{KL}_{x}^{\Omega_2} \right) + c
	\end{equation}
	where $\eta\sim\mathcal{N}(0,\mathbf{I})$, $c$ is a constant, and for $i=1,2$,
    \begin{equation}
        \mathrm{KL}_{x}^{\Omega_i} = \frac{1}{2} \left(\ln \frac{|\Sigma_i|}{|\mathcal{G}^{\sigma}(I)_x|} - d + \operatorname{tr} (\Sigma_i^{-1}\mathcal{G}^{\sigma}(I)_x) + (\mathcal{G}^{\mu}(I)_x - \mu_i)^{T} \Sigma_i^{-1} (\mathcal{G}^{\mu}(I)_x - \mu_i)\right).
    \end{equation}
\end{prop}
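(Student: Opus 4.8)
The plan is to evaluate the two summands in the ELBO decomposition~\eqref{ELBO} separately and then assemble them. I would take as the starting point the identity, already recorded just before the proposition, that the ELBO equals $E_{q(Z\mid I)}\ln p(I\mid Z,u) - \mathrm{KL}(q(Z\mid I)\|p(Z\mid u))$; the whole task is then to turn each of these two pieces into a closed form using the Gaussian modelling choices in~\eqref{ass1}--\eqref{ass2}.

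First I would handle the data-fidelity term $E_{q(Z\mid I)}\ln p(I\mid Z,u)$. By~\eqref{ass2} the conditional $p(I\mid Z,u)$ depends on $Z$ only and factorizes over pixels as $\prod_{x}\mathcal{N}(I_x;\mathcal{F}(Z)_x,1)$, so its logarithm is $-\tfrac{1}{2}\|I-\mathcal{F}(Z)\|^2$ plus a constant collecting the $\ln(2\pi)$ normalizers. The key move is the reparameterization trick: since $q(Z\mid I)$ is the product of $\mathcal{N}(\mathcal{G}^{\mu}(I)_x,\mathcal{G}^{\sigma}(I)_x)$, sampling $Z\sim q(Z\mid I)$ is equivalent to setting $Z=\mathcal{G}^{\mu}(I)+\sqrt{\mathcal{G}^{\sigma}(I)}\,\eta$ with $\eta\sim\mathcal{N}(0,\mathbf{I})$. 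Substituting this and pushing the expectation onto $\eta$ converts $E_{q(Z\mid I)}$ into $E_{\eta}$ and produces exactly the first term $-\tfrac{1}{2}E_{\eta}\|\mathcal{F}(\mathcal{G}^{\mu}(I)+\sqrt{\mathcal{G}^{\sigma}(I)}\,\eta)-I\|^2$ of~\eqref{ELBO1}.

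Next I would compute the regularization term $\mathrm{KL}(q(Z\mid I)\|p(Z\mid u))$. Because both $q(Z\mid I)$ and $p(Z\mid u)$ are products over pixels $x$, the KL divergence splits additively as $\sum_{x}\mathrm{KL}(q(Z_x\mid I)\|p(Z_x\mid u_x))$. For each fixed $x$ this is the KL divergence between two $d$-dimensional Gaussians, $\mathcal{N}(\mathcal{G}^{\mu}(I)_x,\mathcal{G}^{\sigma}(I)_x)$ and either $\mathcal{N}(\mu_1,\Sigma_1)$ or $\mathcal{N}(\mu_2,\Sigma_2)$ depending on whether $u_x=1$ or $u_x=0$, and the standard closed form for the Gaussian--Gaussian KL divergence reproduces precisely $\mathrm{KL}_x^{\Omega_i}$. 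Finally, since $u_x\in\{0,1\}$, the case selection is encoded by the convex combination $u_x\mathrm{KL}_x^{\Omega_1}+(1-u_x)\mathrm{KL}_x^{\Omega_2}$, yielding the middle term of~\eqref{ELBO1}, while the accumulated Gaussian normalizers from the fidelity term are absorbed into the constant $c$.

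Once the two structural observations—factorization over pixels and the reparameterization trick—are in place, the argument is largely bookkeeping, so the only genuinely technical step will be deriving the multivariate Gaussian KL formula. The hard part will be evaluating $E_{q(Z_x\mid I)}[(Z_x-\mu_i)^{T}\Sigma_i^{-1}(Z_x-\mu_i)]$: I would expand the quadratic form and use $E[Z_x]=\mathcal{G}^{\mu}(I)_x$ together with $\mathrm{Cov}(Z_x)=\mathcal{G}^{\sigma}(I)_x$ to produce the trace term $\operatorname{tr}(\Sigma_i^{-1}\mathcal{G}^{\sigma}(I)_x)$ and the Mahalanobis term $(\mathcal{G}^{\mu}(I)_x-\mu_i)^{T}\Sigma_i^{-1}(\mathcal{G}^{\mu}(I)_x-\mu_i)$; the log-determinant and the $-d$ contribution then follow from the normalizing constants. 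This is classical but must be carried out carefully to match the signs and the factors of $\tfrac{1}{2}$ in the stated expression.
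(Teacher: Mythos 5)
Your proposal is correct and follows essentially the same route as the paper's own proof: split the ELBO into the expected log-likelihood and the KL term, apply the reparameterization trick to convert $E_{q(Z\mid I)}$ into $E_\eta$ for the reconstruction term, and use the pixelwise factorization plus the closed-form Gaussian--Gaussian KL divergence (which the paper simply cites rather than re-derives) with the binary $u_x$ encoding the case selection.
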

The proof of Proposition~\ref{MainProp} is shown in section~\ref{proof_of_main_prop} in Appendix. In our model, we replace the likelihood term $\ln p(I\mid u)$ by the ELBO term derived in Proposition~\ref{MainProp} and obtain our loss function as
\begin{equation}\label{Energy_Functional}
	E(u,\mathcal{F},\mathcal{G})= \underbrace{\frac{1}{2}  E_{\eta}\|\mathcal{F}(\mathcal{G}^{\mu}(I)+ \sqrt{\mathcal{G}^{\sigma}(I)}\eta)-I\|^2}_{\mathrm{Reconstruction}} + \sum_{x \in \Omega} \underbrace{\nu \|\nabla u_x\|_2}_{\mathrm{Regularization}} + \underbrace{u_x\mathrm{KL}_x^{\Omega_1} + (1-u_x)\mathrm{KL}_{x}^{\Omega_2}}_{\mathrm{KL}}.
\end{equation}
In this paper, we parameterize the decoder $\mathcal{F}$ and the encoder $\mathcal{G}$ by deep neural networks, and the concrete settings of $\mathcal{F}$ and $\mathcal{G}$ are given in section~\ref{experiments}.

\subsection{Numerical Algorithm}\label{numerical_algo}
The minimization of~\eqref{Energy_Functional} is a challenging problem due to its non-convexity and non-smoothness. Similar to~\cite{chan2006algorithms,potts1952some}, we formulate~\eqref{Energy_Functional} as the following binary labeling problem,
\begin{equation}\label{model:relax}
	E (\theta,\gamma,u) = \nu\|\nabla u\|_{1,2} + \frac{1}{2}E_\eta\|\mathcal{F}_\theta(\mathcal{G}^{\mu}_\gamma(I)+\sqrt{\mathcal{G}^{\sigma}_{\gamma}(I)}\eta)-I\|^2 + \mathrm{KL}^{\Omega_1}_{u} + \mathrm{KL}^{\Omega_2}_{1 - u}, \quad u\in\{0,1\}^{n\times m},
\end{equation}
where $\mathcal{F}$ and $\mathcal{G}$ are parameterized by deep neural networks $\mathcal{F}_\theta$ and $\mathcal{G}_\gamma$, $\theta$ and $\gamma$ are parameters, $\nabla v_{i,j} = (v_{i,j}-v_{i-1,j},v_{i,j}-v_{i,j-1})^\top\in \{-1,0,1\}$, $\|\nabla v\|_{1,2} = \sum_{x\in \Omega} \|\nabla v_{x}\|_2$, and $\mathrm{KL}_{u}^{\Omega_i} = \sum_{x\in \Omega} u_x \mathrm{KL}_{x}^{\Omega_i}$, $i=1,2$.
To relax the combinatorial constraints on $u$, we approximate $u$ by $S(\phi)$ where $S(x) = 1/(1+\exp(-x))$ is the sigmoid function which is a one-to-one mapping between $\mathbb{R}$ and $(0,1)$. Thus, the model~\eqref{model:relax} is relaxed to
\begin{equation}\label{loss}
	E (\theta,\gamma,\phi) = \nu\|\nabla S(\phi)\|_{1,2} + \frac{1}{2} E_\eta\|\mathcal{F}_\theta(\mathcal{G}^{\mu}_\gamma(I)+ \sqrt{\mathcal{G}^{\sigma}_{\gamma}(I)}\eta)-I\|^2+\mathrm{KL}^{\Omega_1}_{S(\phi)}+\mathrm{KL}^{\Omega_2}_{1 - S(\phi)}.
\end{equation}
Moreover, we introduce a squared $\ell_2$-term for penalizing the $\nabla S(\phi)$ in~\eqref{loss}, which leads to the following loss function:
\begin{equation}\label{relax:loss}
	\begin{aligned}
		E_{\LS}(\theta,\gamma, \phi, w) =& \nu\|w\|_{1,2} +\frac{\lambda}{2}\|w-\nabla S(\phi)\|_2^2+ \frac{1}{2} E_\eta\|\mathcal{F}_\theta(\mathcal{G}^{\mu}_\gamma(I)+ \sqrt{\mathcal{G}^{\sigma}_{\gamma}(I)}\eta)-I\|^2 \\
		&+\mathrm{KL}^{\Omega_1}_{S(\phi)}+\mathrm{KL}^{\Omega_2}_{1 - S(\phi)},
	\end{aligned}
\end{equation}
where $\lambda>0$ is the penalty parameter. 

To minimize~\eqref{relax:loss}, we adopt the alternating optimization method. Specifically, given the current estimate $(\theta^k,\gamma^k,\phi^k,w^k)$, our method consists of the following two steps: fixing $w^k$, update $(\theta^{k+1},\gamma^{k+1},\phi^{k+1})$ via
\begin{equation}\label{update}
	\begin{cases}
		\theta^{k+1} = \theta^k - \alpha_1\nabla_{\theta} E_{\LS}(\theta^k,\gamma^k,\phi^k,w^k)\\
		\gamma^{k+1} = \gamma^k - \alpha_2\nabla_{\gamma} E_{\LS}(\theta^k,\gamma^k,\phi^k,w^k)\\
		\phi^{k+1} = \phi^k - \alpha_3\nabla_{\phi} E_{\LS}(\theta^k,\gamma^k,\phi^k,w^k)
	\end{cases},
\end{equation}
where $\alpha_i>0, i=1,2,3$ are step sizes; fixing $(\theta^{k+1},\gamma^{k+1},\phi^{k+1})$, update $w^{k+1}$ via
\begin{equation}\label{sub:w}
	w^{k+1} = \arg\min_w \left\{\nu\|w\|_{1,2}+\frac{\lambda}{2}\|w-\nabla S(\phi^{k+1})\|_2^2\right\}.
\end{equation}
It is noted that $\nabla_\theta E_{\LS}$ and $\nabla_\gamma E_{\LS}$ can be efficiently estimated by the auto-differentiation scheme in open source software, e.g., TensorFlow~\cite{abadi2016tensorflow} or PyTorch~\cite{paszke2019pytorch}. Moreover, we give the analytic solution for the sub-problem~\eqref{sub:w} in the next proposition, and the proposed unsupervised segmentation is present in Algorithm~\ref{alg:VAE_seg}.
\begin{prop}\label{cal_length}
	The solution of \eqref{sub:w} is given by
	\begin{equation}\label{sol:w}
		w^{k+1}_{i,j} = \max\left(\|\nabla S(\phi^{k+1})_{i,j}\|_2-\frac{\nu}{\lambda},0\right)\frac{\nabla S(\phi^{k+1})_{i,j}}{\|\nabla S(\phi^{k+1})_{i,j}\|_2}, \forall i,j=1,\ldots,n.
	\end{equation}
\end{prop}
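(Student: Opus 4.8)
The plan is to recognize \eqref{sub:w} as the proximal operator of the vector $\ell_{1,2}$ norm and to exploit the fact that both the penalty $\|w\|_{1,2}=\sum_{x\in\Omega}\|w_x\|_2$ and the fidelity $\|w-\nabla S(\phi^{k+1})\|_2^2=\sum_{x\in\Omega}\|w_x-\nabla S(\phi^{k+1})_x\|_2^2$ decouple over the pixel index $x=(i,j)$. Consequently the minimization splits into independent two-dimensional problems, one for each $x$, of the form $\min_{v\in\mathbb{R}^2}\ \nu\|v\|_2+\frac{\lambda}{2}\|v-b\|_2^2$ with $b:=\nabla S(\phi^{k+1})_{i,j}$. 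Each such objective is the sum of a convex norm and a strictly convex coercive quadratic, hence strictly convex, so it admits a unique minimizer $v^\star$ that is completely characterized by the first-order optimality condition $0\in\nu\,\partial\|v^\star\|_2+\lambda(v^\star-b)$.

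First I would recall the subdifferential of the Euclidean norm: $\partial\|v\|_2=\{v/\|v\|_2\}$ when $v\neq 0$, and $\partial\|v\|_2$ equals the closed unit ball $\{g\in\mathbb{R}^2:\|g\|_2\le 1\}$ when $v=0$. I then split into two complementary cases. If $v^\star\neq 0$, the optimality condition reads $\lambda b=\lambda v^\star+\nu\,v^\star/\|v^\star\|_2=v^\star\bigl(\lambda+\nu/\|v^\star\|_2\bigr)$; since the scalar coefficient is strictly positive, this forces $v^\star$ to be a positive multiple of $b$. Writing $v^\star=t\,b/\|b\|_2$ with $t=\|v^\star\|_2>0$ and taking Euclidean norms yields $\lambda\|b\|_2=\lambda t+\nu$, i.e. $t=\|b\|_2-\nu/\lambda$, which is admissible precisely when $\|b\|_2>\nu/\lambda$. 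If instead $v^\star=0$, the condition becomes $\lambda b\in\nu\{g:\|g\|_2\le 1\}$, i.e. $\|b\|_2\le\nu/\lambda$. Together these two regimes give $v^\star=\max(\|b\|_2-\nu/\lambda,0)\,b/\|b\|_2$, which is exactly \eqref{sol:w} once $b=\nabla S(\phi^{k+1})_{i,j}$ is restored.

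There is no deep obstacle in this argument; it is a textbook proximal computation. The only points requiring care are the handling of the nonsmooth kink at $v=0$ through the subdifferential rather than a gradient, and the degenerate case $b=0$, where $b/\|b\|_2$ is undefined but the formula is understood to return $v^\star=0$, consistent with the optimality condition $\|b\|_2\le\nu/\lambda$. I would finally note that strict convexity guarantees uniqueness, so the two cases cannot both yield distinct minimizers, and assembling the per-pixel solutions over all $x\in\Omega$ recovers the claimed $w^{k+1}$.
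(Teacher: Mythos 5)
Your argument is correct and complete: the objective in \eqref{sub:w} decouples pixel-wise, and each two-dimensional subproblem is the proximal map of $\nu\|\cdot\|_2$, solved exactly by the block soft-thresholding formula via the subdifferential optimality condition, with the $v^\star=0$ and $b=0$ cases handled properly. The paper itself states Proposition~\ref{cal_length} without proof, implicitly relying on this same classical isotropic shrinkage computation, so your derivation supplies precisely the standard argument the paper takes for granted.
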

\begin{algorithm}[] 
	\caption{The proposed unsupervised segmentation} 
	\label{alg:VAE_seg} 
	\begin{algorithmic}[1]
		\Require 
		Image $I:\Omega \to \mathbb{R}$ needs to be segmented.
		\Ensure
		The foreground and background region mask $u: \Omega \to \{0,1\}$.
		\State Initial encoder, decoder network $\mathcal{F}_{\theta}, \mathcal{G}_{\gamma}$ and 
		the region function $\phi$.
		\For {$k=0,1,2,3,...$}
		\State Update network parameters $\gamma, \theta$ and function $\phi$ by~\eqref{update}.
		\State Update $w$ by~\eqref{sol:w}.
		\EndFor
		\State Take the sign of $\phi$ to get the final region mask $u$. \\
		\Return $u$.
	\end{algorithmic} 
\end{algorithm}

\begin{rem}
	The reconstruction loss in~\eqref{Energy_Functional} can be efficiently estimated by the Monte-Carlo method~\cite{metropolis1949monte}. In this work, the number of sampling is set as 1. Moreover, the formulation can be easily extended for the color image by setting $I_x\in\mathbb{R}^3$, and the results are shown in section~\ref{experiments}.
\end{rem}

\begin{rem}
	In this work, we choose the decoder map $\mathcal{F}$ and the encoder map $\mathcal{G}$ as the classic U-nets~\cite{ronneberger2015u}.
	Specifically, a U-net $h$ can be represented by 
	\begin{equation}\label{eq:U-nets}
		h(x) = H_K\circ H_{K-1} \circ \cdots \circ H_1(x),
	\end{equation}
	and $H_i(x) = \sigma \circ P\circ A(x)$ where $\sigma$ is the nonlinear activation function, $P$ is the identity, downsampling or upsampling operator, $A$ is the linear operator that represents the convolution layer or fully connected layer. 
\end{rem}

\subsection{Convergence analysis}
In this subsection, we establish the convergence properties for Algorithm~\ref{alg:VAE_seg} based on the analysis of the multi-block iteration scheme~\cite{bao2015dictionary,bolte2014proximal,xu2013block} for solving non-convex minimization problems. Before proceeding to the analysis, we make the following assumptions on $E_{\LS}$.
\begin{ass}\label{assum1}
	There exists a bounded set $\mathcal M$ such that the sequence $ \{(\theta^k,\gamma^k,\phi^k,w^k)\}\subset\mathcal{M}$. Moreover, there exists $L_M$ such that for all $(\theta_i,\gamma_i,\phi_i,w)\in\mathcal{M}, i=1,2$, we have
	\begin{equation}\label{Lipschitz}
		\|\nabla_{\theta,\gamma,u} E_{\LS}(\theta_1,\gamma_1,\phi_1,w)-\nabla_{\theta,\gamma,u} E_{\LS}(\theta_2,\gamma_2,\phi_2,w)\|\leq L_M\|(\theta_1,\gamma_1,\phi_1)-(\theta_2,\gamma_2,\phi_2)\|.
	\end{equation}
\end{ass}
The boundedness of the sequence can be guaranteed by imposing the physical constraints on $\theta,\gamma,\phi$, which is a common clip operation in network training. Following the convergence proof framework proposed in~\cite{bolte2014proximal}, the next theorem establishes the convergence property of Algorithm~\ref{alg:VAE_seg}.
\begin{thm}\label{Thm:Seqconver}
	Suppose Assumption~\ref{assum1} holds and $\mathcal{F}$, $\mathcal{G}$ are deep neural networks defined in~\eqref{eq:U-nets} with sigmoid activation. Let $\{x^k\}=\{(\theta^k,\gamma^k,\phi^k,w^k)\}$ be the sequence generated by Algorithm~\ref{alg:VAE_seg} with step sizes $\alpha_i \leq 2/L_{M}$, $i=1,2,3$ in~\eqref{update}. Then, the sequence $\{x^k\}$ converges to a stationary point of $E_{\LS}$ defined in~\eqref{relax:loss}. 
\end{thm}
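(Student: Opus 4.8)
The plan is to recognize Algorithm~\ref{alg:VAE_seg} as a two--block proximal alternating scheme of the type analyzed in~\cite{bolte2014proximal} and to verify the three ingredients of that framework---a sufficient decrease property, a relative error (subgradient) bound, and the Kurdyka--\L ojasiewicz (KL) property---after which the abstract convergence theorem yields convergence of the whole sequence to a critical point. To set this up, I would split the objective~\eqref{relax:loss} as $E_{\LS} = g(w) + H(\theta,\gamma,\phi,w)$, where $g(w) = \nu\|w\|_{1,2}$ is convex and nonsmooth and $H$ collects the quadratic coupling $\tfrac{\lambda}{2}\|w-\nabla S(\phi)\|_2^2$, the reconstruction term, and the two $\mathrm{KL}$ terms. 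On the bounded set $\mathcal{M}$ of Assumption~\ref{assum1}, $H$ is continuously differentiable with Lipschitz gradient (the sigmoid activation and the U--net maps $\mathcal{F}_\theta,\mathcal{G}_\gamma$ of~\eqref{eq:U-nets} are smooth, and the variance outputs $\mathcal{G}^\sigma$ stay bounded away from $0$), so the first block $x_1 = (\theta,\gamma,\phi)$ is updated in~\eqref{update} by a diagonally preconditioned gradient step on $H$, while the second block $w$ is updated in~\eqref{sub:w} by an exact minimization of the full $w$--dependence of $E_{\LS}$.

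For the sufficient decrease I would treat the two blocks separately. The $w$--subproblem~\eqref{sub:w} is $\lambda$--strongly convex, so its exact minimizer satisfies $E_{\LS}(x_1^{k+1},w^{k+1}) \le E_{\LS}(x_1^{k+1},w^k) - \tfrac{\lambda}{2}\|w^{k+1}-w^k\|_2^2$. For the gradient block, the descent lemma applied to the $L_M$--smooth $H$ together with the update identity $x_1^{k+1} - x_1^k = -D\,\nabla_{x_1}H(x^k)$, where $D = \mathrm{diag}(\alpha_1,\alpha_2,\alpha_3)$, gives $E_{\LS}(x_1^{k+1},w^k) \le E_{\LS}(x^k) - \big(\tfrac{1}{\alpha_{\max}} - \tfrac{L_M}{2}\big)\|x_1^{k+1}-x_1^k\|_2^2$. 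Adding the two inequalities produces $E_{\LS}(x^{k+1}) \le E_{\LS}(x^k) - \rho\|x^{k+1}-x^k\|_2^2$ for some $\rho>0$; here I note that a strictly positive decrease constant in the $x_1$--block requires $\alpha_{\max}<2/L_M$, which I would read as the intended content of the step--size condition $\alpha_i \le 2/L_M$.

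The relative error condition is where the exact $w$--minimization pays off. Because~\eqref{sub:w} minimizes the entire $w$--dependence of $E_{\LS}$, its optimality condition is precisely $0 \in \partial_w E_{\LS}(x^{k+1})$, so the $w$--component of the subgradient can be chosen to vanish. For the $x_1$--component I would use $\nabla_{x_1}H(x^k) = -D^{-1}(x_1^{k+1}-x_1^k)$ together with the Lipschitz continuity of $\nabla_{x_1}H$ (the dependence of $\nabla_{x_1}H$ on $w$ is affine through the coupling term, so Assumption~\ref{assum1} upgrades to joint Lipschitz continuity on $\mathcal{M}$) to bound $\|\nabla_{x_1}H(x^{k+1})\| \le \|\nabla_{x_1}H(x^{k+1}) - \nabla_{x_1}H(x^k)\| + \|\nabla_{x_1}H(x^k)\| \le b\,\|x^{k+1}-x^k\|_2$. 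Collecting both components yields a point $d^{k+1}\in\partial E_{\LS}(x^{k+1})$ with $\|d^{k+1}\| \le b\,\|x^{k+1}-x^k\|_2$.

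The main obstacle is the KL property. The usual route---semialgebraicity of the objective---fails here because the sigmoid activation $S$ and the logarithms appearing in the $\mathrm{KL}_x^{\Omega_i}$ terms involve the exponential function. I would instead invoke o--minimality: every building block of $E_{\LS}$ (polynomials, the $\ell_{1,2}$ norm via square roots, $\exp$, $\ln$, and their finite compositions through the U--net layers~\eqref{eq:U-nets}) is definable in the log--exp structure $\mathbb{R}_{\exp}$, which is o--minimal, and functions definable in an o--minimal structure satisfy the KL property at every point; positivity of $\mathcal{G}^\sigma$ and of the determinant ratios on $\mathcal{M}$ keeps the logarithms well defined. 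With the sufficient decrease, the relative error bound, the KL property, the continuity of $E_{\LS}$, and the boundedness of $\{x^k\}$ (hence existence of a limit point) in hand, the abstract theorem of~\cite{bolte2014proximal} shows that $\{x^k\}$ has finite length $\sum_k\|x^{k+1}-x^k\|_2<\infty$, is therefore Cauchy, and converges to a single stationary point $x^\star$ of $E_{\LS}$ with $0\in\partial E_{\LS}(x^\star)$.
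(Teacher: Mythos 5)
Your proposal follows essentially the same route as the paper's own proof: establish the sufficient decrease and relative error conditions (exact strongly convex minimization in $w$, descent lemma for the gradient block $(\theta,\gamma,\phi)$), verify the Kurdyka--\L ojasiewicz property, and invoke the abstract convergence theorem of the Attouch--Bolte--Svaiter/Bolte--Sabach--Teboulle framework, which is exactly the structure of Proposition~\ref{pro:seqproperty}, Theorem~\ref{Thm:subconvergence}, and Theorem~\ref{klconvergence} in the appendix. If anything you are slightly more careful than the paper in two spots: you insert the Lipschitz triangle inequality to bound $\|\nabla_{\theta,\gamma,\phi}E_{\LS}(x^{k+1})\|$ (the paper writes an equality that really holds only at $x^k$), and you correctly note that a strictly positive decrease constant requires $\alpha_i<2/L_M$ rather than $\leq$; your explicit o-minimality argument for the KL property is the substance behind the paper's citation of Proposition~2 in~\cite{zeng2019global}.
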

\begin{proof}
	The proof is present in Section~\ref{proof_main_thm} in Appendix.
\end{proof}
\begin{rem}
	The ReLU activation can be replaced by the other differentiable activation functions, e.g., CReLU. It is noted that the softplus function can be seen as the smooth approximation of the ReLU, which is a common activation used in deep learning.
\end{rem}

\section{Extensions}\label{extensions}
In this section, we extend our idea to multi-phase segmentation and dataset based segmentation problems.
\subsection{Multi-phase Image Segmentation}
Assume there are $N$ phases $\Omega_1, \Omega_2, ..., \Omega_N$, denote the indicator function of $\Omega_i$ as $u_i$, i.e., $u_i(x)=\begin{cases}
	1, x\in \Omega_i \\
	0, x\notin \Omega_i
\end{cases}$ and $\sum_{i=1}^N u_i = 1$ for disjoint regions. Then the loss function in~\eqref{Energy_Functional} becomes
\begin{equation}\label{Energy_Functional_Multi-phase}
	E(\{u_i\}_{i=1}^{N},\mathcal{F},\mathcal{G}) = \underbrace{\frac{1}{2} E_{\eta}\|\mathcal{F}(\mathcal{G}^{\mu}(I)+ \sqrt{\mathcal{G}^{\sigma}(I)}\eta)-I\|^2}_{\mathrm{Reconstruction}} + \sum_x \underbrace{\nu\sum_{i=1}^N\|\nabla u_{ix}\|_2}_{\mathrm{Regularization}}+
	\underbrace{\sum_{i=1}^N u_{ix} \mathrm{KL}^{\Omega_i}_x}_{\mathrm{KL}},
\end{equation}
where $\eta\sim\mathcal{N}(0,\mathbf{I})$. Instead of directly using $N$ independent level-set functions~\cite{zhao1996variational} or the vector valued Heaviside function~\cite{vese2002multiphase}, we represent phases $(u_1, u_2, \ldots, u_N)$ by $\Phi=(\phi_1, \phi_2, \ldots,\phi_N)$ with 
\begin{equation}
	u_i = S_i(\Phi) = \frac{\exp{\phi_i}}{\sum_{j=1}^{N}\exp{\phi_j}},\ i = 1,2,\ldots,N.
\end{equation}

The smoothed relaxation of \eqref{Energy_Functional_Multi-phase} is
\begin{equation}\label{Energy_Functional_Multiphase_smooth}
	E(\Phi,\mathcal{F},\mathcal{G})= \frac{1}{2} E_{\eta}(\mathcal{F}\|\mathcal{G}^{\mu}(I)+ \sqrt{\mathcal{G}^{\sigma}(I)}\eta)-I\|^2+\sum_x \nu\sum_{i=1}^N\|\nabla S_{i}(\Phi)_x\|_2+ \sum_{i=1}^N S_{i}(\Phi)_x\mathrm{KL}^{\Omega_i}_x.
\end{equation}
Same as section~\ref{numerical_algo}, the object function of our multi-phase segmentation model becomes
\begin{equation}\label{model:relax_multiphase}
	\tilde{E} (\theta,\gamma,\Phi) = \frac{1}{2} E_\eta\|\mathcal{F}_\theta(\mathcal{G}^{\mu}_\gamma(I)+\sqrt{\mathcal{G}^{\sigma}_{\gamma}(I)}\eta)-I\|^2 +  \sum_{i=1}^N\nu\|\nabla S_i(\Phi)\|_{1,2}
	+ \sum_{i=1}^N \mathrm{KL}^{\Omega_i}_{S(\phi)},
\end{equation}
with introducing a squared $\ell_2$-term for penalizing $\nabla S_i(\Phi)$, we obtain the following loss function
\begin{equation}\label{model:loss_multiphase}
	\begin{aligned}
		E_{\LS} (\theta,\gamma,\Phi) =& \sum_{i=1}^N\left(\nu\|w_i\|_{1,2}+\frac{\lambda}{2}\|w_i-\nabla S_i(\Phi)\|_2^2\right) \\
		&+ \frac{1}{2} E_\eta\|\mathcal{F}_\theta(\mathcal{G}^{\mu}_\gamma(I)+\sqrt{\mathcal{G}^{\sigma}_{\gamma}(I)}\eta)-I\|^2 + \sum_{i=1}^N \mathrm{KL}^{\Omega_i}_{S(\phi)},
	\end{aligned}
\end{equation}
where $\lambda > 0$ is the penalty parameter. Denote $(w_1^k,...,w_N^k)$ by $W^k$, with the same alternative optimization method in section~\ref{numerical_algo}, we update $(\theta^{k+1},\gamma^{k+1},\Phi^{k+1})$ via
\begin{equation}\label{update_multiphase}
	\begin{cases}
		\theta^{k+1} = \theta^k - \alpha_1\nabla_{\theta} E_{\LS}(\theta^k,\gamma^k,\Phi^k,W^k)\\
		\gamma^{k+1} = \gamma^k - \alpha_2\nabla_{\gamma} E_{\LS}(\theta^k,\gamma^k,\Phi^k,W^k)\\
		\Phi^{k+1} = \Phi^k - \alpha_3\nabla_{\Phi} E_{\LS}(\theta^k,\gamma^k,\Phi^k,W^k)
	\end{cases},
\end{equation}
where $\alpha_i>0, i=1,2,3$ are step sizes; fixing $(\theta^{k+1},\gamma^{k+1},\Phi^{k+1})$, update $W^{k+1}$ via
\begin{equation}\label{sub:w_multiphase}
	w_i^{k+1} = \arg\min_{w_i} \left\{\nu\|w_i\|_{1,2}+\frac{\lambda}{2}\|w_i-\nabla S_{i}(\Phi^{k+1})\|_2^2\right\},\quad \forall i=1,\ldots,N.
\end{equation}
In particular, from Proposition~\ref{cal_length} we have:
\begin{equation}\label{sol:w_multiphase}
	w^{k+1}_{i\text{ } k,l} = \max\left(\|\nabla S_{i}(\Phi^{k+1})_{k,l}\|_2-\frac{\nu}{\lambda},0\right)\frac{\nabla S_{i}(\Phi^{k+1})_{k,l}}{\|\nabla S_{i}(\Phi^{k+1})_{k,l}\|_2}, \forall i=1,\ldots,N; \text{ } k,l=1,\ldots,n.
\end{equation}
In summary, the proposed multi-phase segmentation algorithm is present in Algorithm~\ref{alg:VAE_multi_seg}.
\begin{algorithm}
	\caption{The proposed unsupervised multi-phase segmentation} 
	\label{alg:VAE_multi_seg} 
	\begin{algorithmic}[1]
		\Require 
		Image $I:\Omega \to \mathbb{R}$ needs to be segmented and $N$ the number of regions
		\Ensure
		The multi-phase region mask $u: \Omega \to \{1,\dots,N\}$.
		\State Initial encoder, decoder network $\mathcal{F}_{\theta}$, $\mathcal{G}_{\gamma}$ and 
		the region function $\Phi=(\phi_1,\dots,\phi_N)$.
		\For {$k=0,1,2,3,...$}
		\State Update network parameters $\gamma$, $\theta$ and function $\Phi$ by~\eqref{update_multiphase}.
		\State Update $w$ by~\eqref{sol:w_multiphase}.
		\EndFor
		\State Let $u(x) = \arg \max_{i}\{\phi_i(x)\mid i=1,\dots,N\}$. \\
		\Return $u$.
	\end{algorithmic} 
\end{algorithm}

\subsection{Dataset Based Image Segmentation}
In this section, we extend the proposed idea to train a deep neural network on a set of unlabeled images. Once the training process is finished, the network is fixed for estimating the segmentation masks on the test images, and thus significantly accelerates the inference speed comparing to classical single image based methods. Mathematically, given an image dataset $\mathcal{S}=\{I_i\}$ sampled from the image distribution $p(I)$, e.g., image of flowers, human faces. Our goal is to learn a segmentation function $\mathcal{U}$ such that it can separate each input image $I_i$ into two disjoint parts: the fg $\Omega_1^i$ and the bg $\Omega_2^i$. To achieve this goal, we propose our dataset based segmentation objective function by modifying the loss function in~\eqref{Energy_Functional} to
\begin{equation}\label{Energy_Functional_dataset}
\begin{aligned}
    E(\mathcal{U},\mathcal{F},\mathcal{G}) =& \sum_{I \in \mathcal{S}} \underbrace{\frac{1}{2} E_{\eta}(\mathcal{F}\|\mathcal{G}^{\mu}(I)+ \sqrt{\mathcal{G}^{\sigma}(I)}\eta)-I\|^2}_{\mathrm{Reconstruction}} \\
    &+ \underbrace{\sum_{x\in \Omega} \mathcal{U}(I)_{x} \mathrm{KL}_{x}^{\Omega_1} + \left(1-\mathcal{U}(I)_{x}\right) \mathrm{KL}_{x}^{\Omega_2}}_{\mathrm{KL}} + \underbrace{\mathcal{R}(\mathcal{U}(I))}_{\mathrm{Regularization}},
\end{aligned}
\end{equation}
where $\eta \sim \mathcal{N}(0, \mathbf{I})$, and $\mathcal{R}$ denotes the regularization functions. Compared to the previous models in single image case, we use a deep neural network that outputs the segmentation results. Thus, we impose additional  constraints on $\mathcal{U}$ such that it is applicable for the images in the whole dataset.

\noindent{\bf{Augmentation invariant.}} The segmentation function $\mathcal{U}$ is exchangeable with the augmentation operators. Denote these operators as $\mathcal{O}$, for each iteration, we rotate the mini-batch images $\mathcal{S}_{k}$ for $90^{\circ}$, or $180^{\circ}$, or $270^{\circ}$, and flip the images randomly, then compute the binary cross entropy loss between $\mathcal{U} \left(\mathcal{O}(\mathcal{S}_{k}) \right)$ and $\mathcal{O} \left(\mathcal{U}(\mathcal{S}_{k})\right)$, i.e.,
	\begin{equation}\label{Aug_loss}
		BCE(\mathcal{U}) = -\sum_{I\in \mathcal{S}_{k}} \sum_{x} \mathcal{O} \left(\mathcal{U}(I)\right)_{x} \ln  \mathcal{U} \left(\mathcal{O}(I)\right)_{x}
		- \left(1 - \mathcal{O} \left(\mathcal{U}(I)\right)_{x}\right) \ln \left(1 - \mathcal{U} \left(\mathcal{O}(I)\right)_{x} \right).
	\end{equation}

\noindent{\bf{Conservation of region information.}} To avoid generating empty regions, we use a discriminator network $\mathcal{D}$ to distinguish whether the outputs of decoder $\mathcal{F}$ are generated from empty regions. For each iteration, we sample mini-batch images $\mathcal{S}_{k}$ from dataset $\mathcal{S}$, and sample fake empty fg/bg images $\mathcal{S}_{\Omega_1}/\mathcal{S}_{\Omega_2}$ from decoder $\mathcal{F}$ by setting $\mathcal{S}_{\Omega_i} = \mathcal{F}(Z_{\Omega_i})$, where $Z_{\Omega_i} \sim \mathcal{N}(\mu_i, \Sigma_i)$, $i=1,2$. Then we compute the classification binary cross entropy loss for discriminator $\mathcal{D}$ to distinguish real images $\mathcal{S}_{k}$ from fake fg and bg images $\mathcal{S}_{\Omega_1}$, $\mathcal{S}_{\Omega_2}$, i.e.,
	\begin{equation}\label{loss_for_d}
		BCE(\mathcal{D}) = -\sum_{I \in \mathcal{S}_{k}} \ln \mathcal{D}(I) - \sum_{I \in \mathcal{S}_{\Omega_1}} \ln (1 - \mathcal{D}(I)) - \sum_{I \in \mathcal{S}_{\Omega_2}} \ln (1 - \mathcal{D}(I)).
	\end{equation}
	Then we fix discriminator $\mathcal{D}$ and decoder $\mathcal{F}$ compute the conservation of region information loss for segmentation function $\mathcal{U}$,
	\begin{equation}\label{CRI_loss}
		CRI(\mathcal{U}) = -\sum_{I\in \mathcal{S}_{k}}\ln \mathcal{D}\left(\mathcal{F}\left(Z_{\Omega_1} \odot \mathcal{U}(I) + Z_{\Omega_2} \odot \left(1 - \mathcal{U}(I) \right) \right )\right)
	\end{equation}
	which force $\mathcal{U}$ generate non-empty regions.

Thus, the regularization loss $\mathcal{R}(\mathcal{U}(I))$ in \eqref{Energy_Functional_dataset} is the summation of the augmentation invariant loss~\eqref{Aug_loss} and the conservation of region information loss~\eqref{loss_for_d}-\eqref{CRI_loss}. Moreover, we adopt the alternating minimization method for solving \eqref{Energy_Functional_dataset}. For each iteration, we first update $\mathcal{F}$, $\mathcal{G}$, $\mathcal{U}$ with the reconstruction and KL losses in~\eqref{Energy_Functional_dataset}, then we update $\mathcal{U}$ with the regularization loss. See Algorithm~\ref{alg:VAE_dataset_seg} for the details.

\begin{algorithm}
	\caption{The proposed dataset based segmentation} 
	\label{alg:VAE_dataset_seg} 
	\begin{algorithmic}[1]
		\Require 
		Image dataset $\mathcal{S} = \{I_i\}$
		\Ensure
		The segmentation function $\mathcal{U}$.
		\State Initial encoder, decoder network $\mathcal{F}, \mathcal{G}$, segmentation function $\mathcal{U}$, and discriminator $\mathcal{D}$.
		\For {$k=0,1,2,3,...$}
		\State Sample a mini-batch $\mathcal{S}_{k}$ from $\mathcal{S}$.
		\State Update network $\mathcal{F}, \mathcal{G}$ and $\mathcal{U}$ with the reconstruction and KL losses in energy function~\eqref{Energy_Functional_dataset}.
		\State Choose an augmentation operator $\mathcal{O}$ randomly, and update $\mathcal{U}$ with the binary cross entropy loss~\eqref{Aug_loss}.
		\State Fix decoder $\mathcal{F}$, generate fake foreground/background images $\mathcal{S}_{\Omega_1}$/$\mathcal{S}_{\Omega_2}$, and update discriminator $\mathcal{D}$ with binary classification loss~\eqref{loss_for_d}.
		\State Fix decoder $\mathcal{F}$ and discriminator $\mathcal{D}$, Update $\mathcal{U}$ by~\eqref{CRI_loss}.
		\EndFor \\
		\Return segmentation function $\mathcal{U}$.
	\end{algorithmic} 
\end{algorithm}

\begin{rem}
    The loss function~\eqref{Energy_Functional_dataset} can be derived under the variational inference framework. Specifically, we assume that $I$ has two latent variables: $Z$ is the latent image that satisfies the Gaussian assumption, and $u$ is the segmentation mask for $I$. Maximizing the ELBO obtained by variational inference is equivalent to the minimize loss function we proposed in~\eqref{Energy_Functional_dataset}. From this perspective, our method is interpretable, see section~\ref{app:vi_dataset} in Appendix for the details.
\end{rem}

\section{Experiments and Results}\label{experiments}
In this section, we show results of our segmentation method. All experiments are evaluated in the sRGB space.

\subsection{Implementation Details}
All encoder, decoder maps are parameterized by the U-net~\cite{ronneberger2015u}, which includes an encoder part (the down-sampling) and a decoder part (the up-sampling). For each down-sampling module, it halves the data size and doubles the number of channels. Correspondingly, for each up-sampling module, it doubles the data size and reduces the number of channels by half. U-net includes 4 down-sampling and 4 up-sampling modules. The result of each down-sampling module is transferred to the corresponding up-sampling module through a skip connection. 

For single image based fg/bg segmentation, the regularization parameter $\nu$ is fixed as $1$, the dimension $d$ of the latent space is fixed as $1$, $\mu_1$, $\mu_2$ in latent space are set as $\mu_1=10$, $\mu_2=-10$, and $\Sigma_1$, $\Sigma_2$ are set as 1. For multi-phase segmentation, we set the latent dimension to be the number of segment regions, $\mu_{i}=5e_{i}$, where $e_{i}=(0,\cdots,1\text{(i-th)},\cdots,0)$, $\Sigma_{i}=\mathbf{I}$, and the regularization parameter $\nu$ is fixed as 1. For dataset based segmentation, the latent dimension is fixed as 1, and we set $\mu_1=-3$, $\mu_2=3$, $\Sigma_1=1$, $\Sigma_2=1$. We use the Monte-Carlo method to estimate the reconstruction loss, where the number of sampling is set as 1.

For single image based fg/bg and multi-phase segmentation, we choose $\mathcal{G}^{\sigma}(\cdot) = \mathbf{I}$ to reduce the parameters. In this case, the objective function in~\eqref{Energy_Functional} is reduced to 
\begin{equation}
\begin{aligned}
	E(u,\mathcal{F},\mathcal{G}) =& \frac{1}{2}  E_{\eta}\|\mathcal{F}(\mathcal{G}^{\mu}(I)+ \eta)-I\|^2 + \sum_{x \in \Omega} \nu \|\nabla u_x\|_2 \\
	&+ \frac{1}{2} \left( u_x \|\mathcal{G}^{\mu}(I)_x - \mu_1\|^2 + (1-u_x)\|\mathcal{G}^{\mu}(I)_x - \mu_2\|^2 \right).
\end{aligned}
\end{equation}
The experimental results show that this setting can achieve a satisfactory performance in single image based segmentation tasks. For dataset based image segmentation, we use 4 neural networks in total, i.e., encoder $\mathcal{G}$ decoder $\mathcal{F}$, segmentation network $\mathcal{U}$, and discriminator $\mathcal{D}$. For $\mathcal{F}$, $\mathcal{G}$ and $\mathcal{U}$ we use the U-net, and for $\mathcal{D}$, the network consists of 5 convolutional layers and one fully connected layer. The number of channels of the convolutional layers are set to 32, 64, 128, 256, 512. We use instance normalization~\cite{ulyanov2016instance} to accelerate the training process. In two phase segmentation, we initialize $u$ using a saliency detection method~\cite{peng2016salient}. In multiphase segmentation and dataset based segmentation, we random initialize $u$ and the segmentation function $\mathcal{U}$.

We use ADAM~\cite{kingma2014adam} algorithm to optimize the network parameters. For our single image based model, the learning rate is set to $1e-1$, and for our dataset based segmentation, we set learning rate to $1e-3$, and use mini-batches of size 128. The auto-gradient framework calculates the discretization of the gradient. Experiments with single images are running on a single NVIDIA GeForce GTX 1080TI GPU, and dataset based experiments are running on an 8 $\times$ NVIDIA GeForce GTX 1080TI GPU server.

\begin{figure}
	\centering
	\begin{tabular}{c@{\hspace{0.01\linewidth}}c@{\hspace{0.01\linewidth}}c@{\hspace{0.01\linewidth}}c@{\hspace{0.01\linewidth}}c}
		\includegraphics[width=0.185\linewidth]{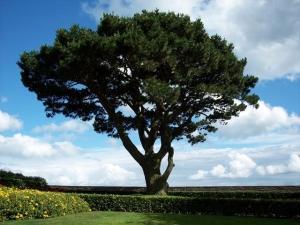} &
		\includegraphics[width=0.185\linewidth]{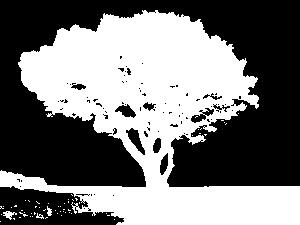} &
		\includegraphics[width=0.185\linewidth]{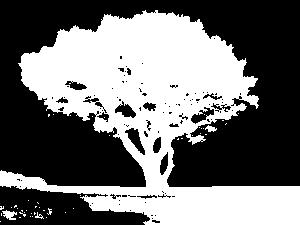} &
		\includegraphics[width=0.185\linewidth]{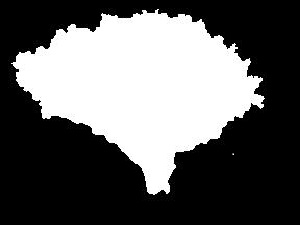} &
		\includegraphics[width=0.185\linewidth]{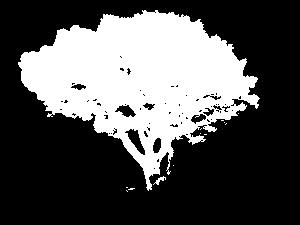} \\
		
		\includegraphics[width=0.185\linewidth]{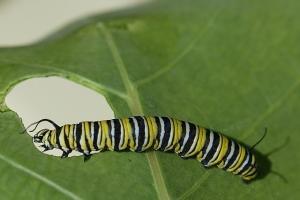} &
		\includegraphics[width=0.185\linewidth]{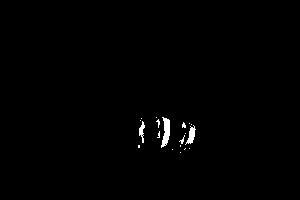} &
		\includegraphics[width=0.185\linewidth]{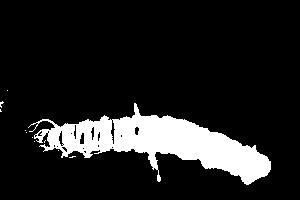} &
		\includegraphics[width=0.185\linewidth]{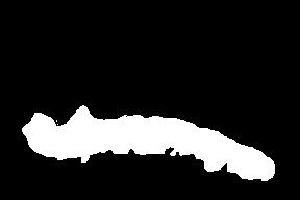} &
		\includegraphics[width=0.185\linewidth]{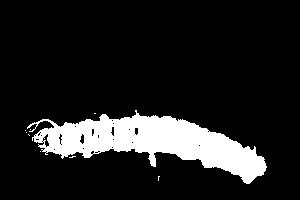} \\
		
		\includegraphics[width=0.185\linewidth]{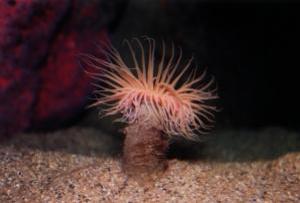} &
		\includegraphics[width=0.185\linewidth]{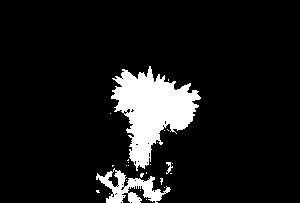} &
		\includegraphics[width=0.185\linewidth]{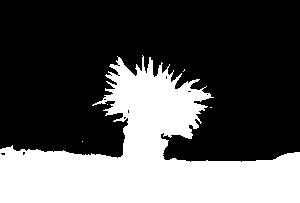} &
		\includegraphics[width=0.185\linewidth]{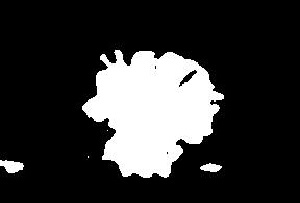} &
		\includegraphics[width=0.185\linewidth]{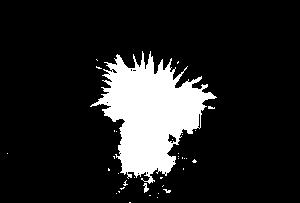} \\
		
		\includegraphics[width=0.185\linewidth]{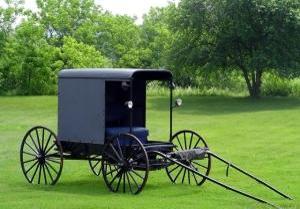} &
		\includegraphics[width=0.185\linewidth]{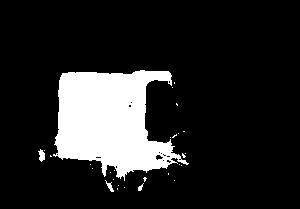} &
		\includegraphics[width=0.185\linewidth]{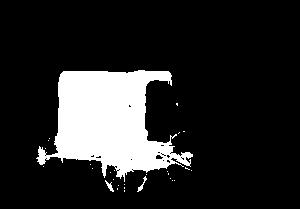} &
		\includegraphics[width=0.185\linewidth]{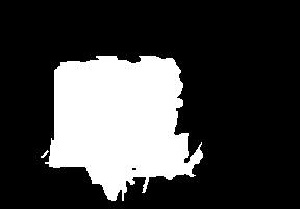} &
		\includegraphics[width=0.185\linewidth]{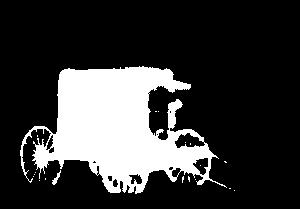} \\
		
		\includegraphics[width=0.185\linewidth]{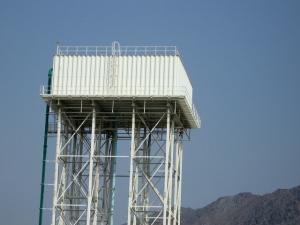} &
		\includegraphics[width=0.185\linewidth]{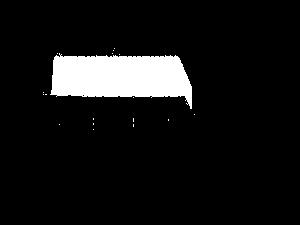} &
		\includegraphics[width=0.185\linewidth]{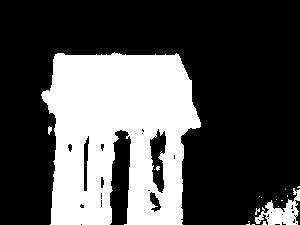} &
		\includegraphics[width=0.185\linewidth]{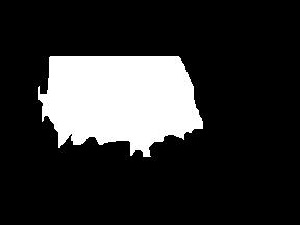} &
		\includegraphics[width=0.185\linewidth]{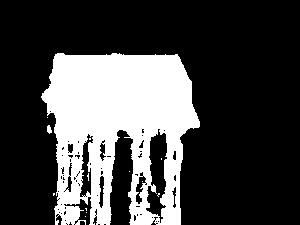} \\
		
		\includegraphics[width=0.185\linewidth]{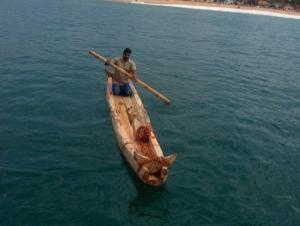} &
		\includegraphics[width=0.185\linewidth]{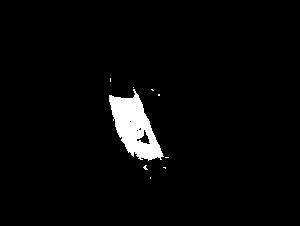} &
		\includegraphics[width=0.185\linewidth]{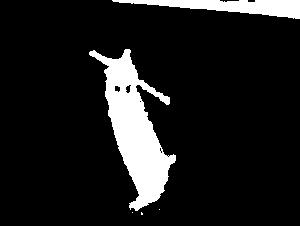} &
		\includegraphics[width=0.185\linewidth]{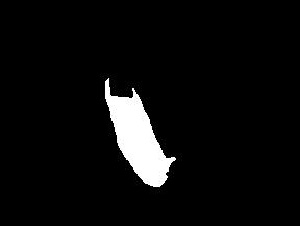} &
		\includegraphics[width=0.185\linewidth]{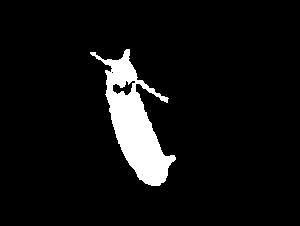} \\
		
		\includegraphics[width=0.185\linewidth]{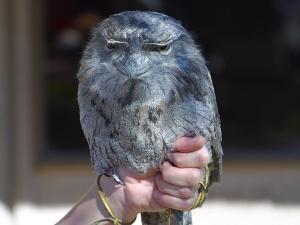} &
		\includegraphics[width=0.185\linewidth]{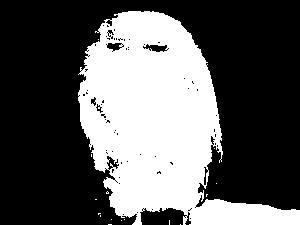} &
		\includegraphics[width=0.185\linewidth]{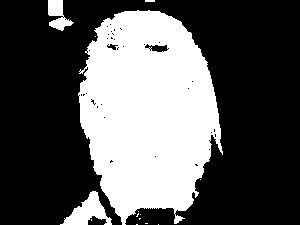} &
		\includegraphics[width=0.185\linewidth]{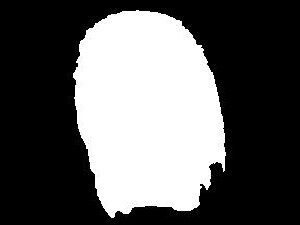} &
		\includegraphics[width=0.185\linewidth]{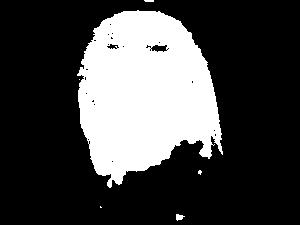} \\
		
		\includegraphics[width=0.185\linewidth]{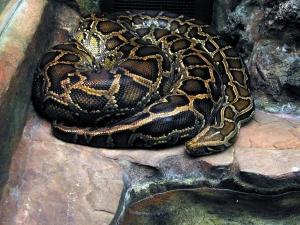} &
		\includegraphics[width=0.185\linewidth]{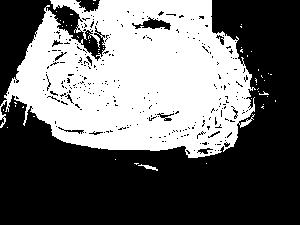} &
		\includegraphics[width=0.185\linewidth]{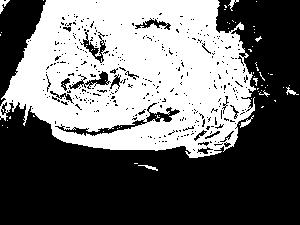} &
		\includegraphics[width=0.185\linewidth]{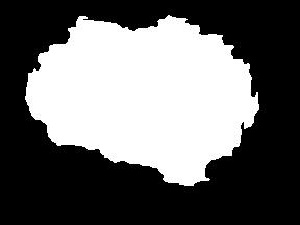} &
		\includegraphics[width=0.185\linewidth]{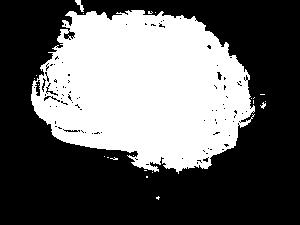} \\
		
		\includegraphics[width=0.185\linewidth]{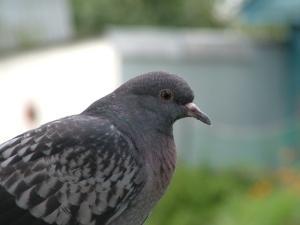} &
		\includegraphics[width=0.185\linewidth]{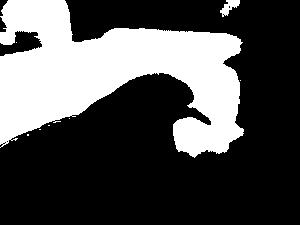} &
		\includegraphics[width=0.185\linewidth]{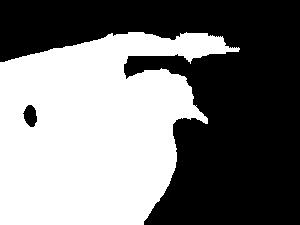} &
		\includegraphics[width=0.185\linewidth]{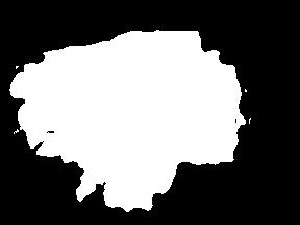} &
		\includegraphics[width=0.185\linewidth]{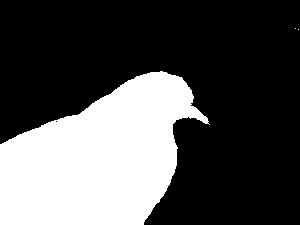} \\
		
		{\small{(a)} Inputs} & {\small{(b)} CV \cite{chan2001active}} 
		& {\small{(c)} SA \cite{cremers2007review}} & 
		{\small{(d)} D-DIP \cite{gandelsman2018double}} &{\small{(f)} Ours}
	\end{tabular}
	\caption{Visual results of our fg/bg segmentation on Weizmann dataset.}
	\label{fig: seg_results}
\end{figure}

\begin{table}
	\centering
	\caption{Segment coverage test results on Weizmann database.}
	\label{Weizmann_1obj_result}
	\begin{tabular}{cccccc}
		\hline
		& D-DIP~\cite{gandelsman2018double}          & N-Cuts~\cite{shi2000normalized}         & CV~\cite{chan2001active}             & SA~\cite{cremers2007review}             & Ours           \\ \hline
		F-measure & 0.83 & 0.70 & 0.79 & 0.85 & \textbf{0.87} \\ \hline
		mIoU      & 0.73 & 0.56 & 0.68 & 0.77 & \textbf{0.79} \\ \hline
	\end{tabular}
\end{table}

\subsection{Fg/bg Segmentation}
We evaluate our fg/bg segmentation method on the Weizmann dataset~\cite{AlpertGBB07}, which contains 100 color images with fg/bg segmentation results manually annotated by people. The Double-DIP model~\cite{gandelsman2018double}, the Chan-Vese model~\cite{chan2001active}, the statistical model~\cite{cremers2007review} and the Normal Cut method~\cite{shi2000normalized} are chosen for comparison. Following the same settings in the Double-DIP model~\cite{gandelsman2018double}, we use the result of a saliency detection method~\cite{peng2016salient} to make an initialization of the segmentation mask $u$ for Ours model, the Double-DIP model~\cite{gandelsman2018double}, the Chan-Vese model~\cite{chan2001active} and the statistical model~\cite{cremers2007review}. For the Normal Cut method~\cite{shi2000normalized}, we evaluate different numbers of regions range from 2 to 10 and choose the best for the whole dataset. We further apply guided filtering~\cite{he2010guided} on the segmentation result to obtain a refined result. 

For quantitative analysis, we evaluate segmentation results by assessing their consistency with the ground truth segmentation. F-measure and mIoU results of each method are evaluated here. Denote $\mathrm{TP}$, $\mathrm{FP}$, $\mathrm{TN}$ and $\mathrm{FN}$ the true positive, false positive, true negative, and false negative values of a particular segmentation, then
\begin{equation}
	\mathrm{Recall}=\frac{\mathrm{TP}}{\mathrm{TP}+\mathrm{FN}} \quad \mathrm{Precision}=\frac{\mathrm{TP}}{\mathrm{TP}+\mathrm{FP}},
\end{equation}
F-measure is the combination of Recall ($\mathrm{R}$) and Precision ($\mathrm{P}$)
\begin{equation}
	\text{F-measure}=\frac{2\mathrm{R}\mathrm{P}}{\mathrm{R}+\mathrm{P}}.
\end{equation}
Whereas mean intersection over union (mIoU) is the ratio between the area of the intersection between the inferred segmentation and the ground truth over the area of their union
\begin{equation}
	\mathrm{mIoU} = \frac{\mathrm{TP}}{\mathrm{FN}+\mathrm{FP}+\mathrm{TP}}.
\end{equation}
See Table~\ref{Weizmann_1obj_result} for the results. From the table, we see that our method gives the highest scores for both F-measure and mIoU. The visual results are given in Figure~\ref{fig: seg_results}, where we find our method achieves more accurate segmentation results than other methods.

\begin{figure}
	\centering
	\begin{tabular}{c@{\hspace{0.01\linewidth}}c@{\hspace{0.01\linewidth}}c@{\hspace{0.01\linewidth}}c}
		\includegraphics[width=0.23\linewidth]{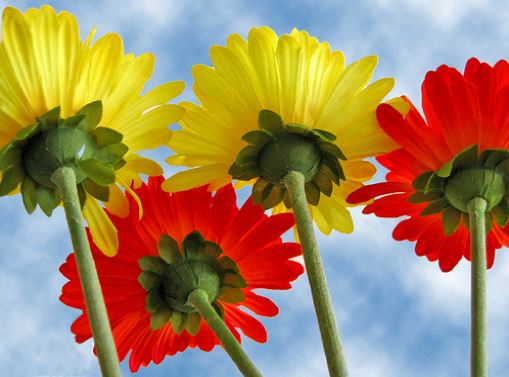} &
		\includegraphics[width=0.23\linewidth]{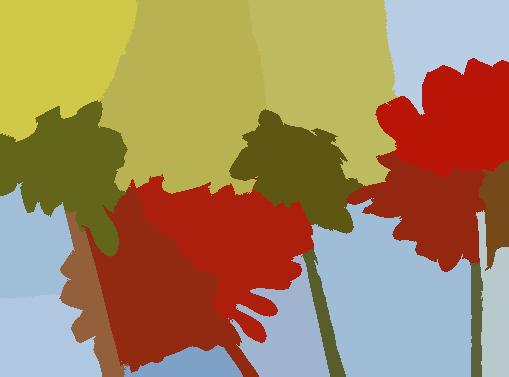} &
		\includegraphics[width=0.23\linewidth]{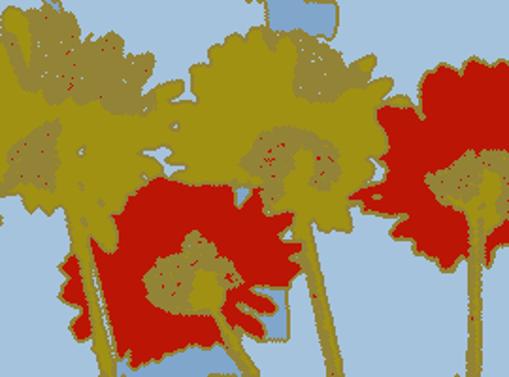} &
		\includegraphics[width=0.23\linewidth]{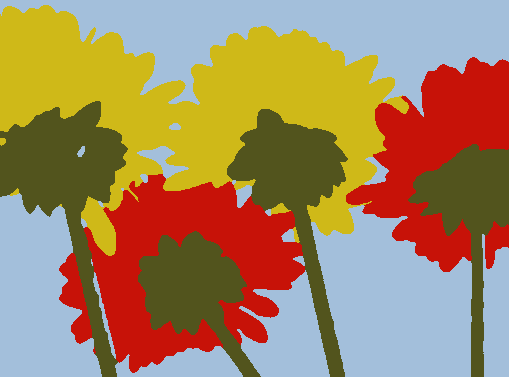} \\
		
		\includegraphics[width=0.23\linewidth]{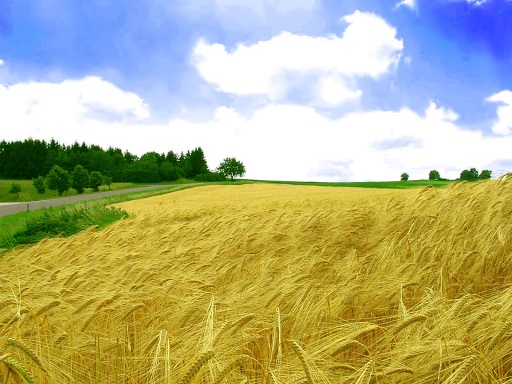} &
		\includegraphics[width=0.23\linewidth]{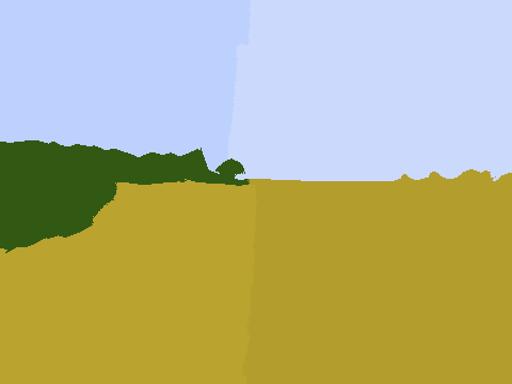} &
		\includegraphics[width=0.23\linewidth]{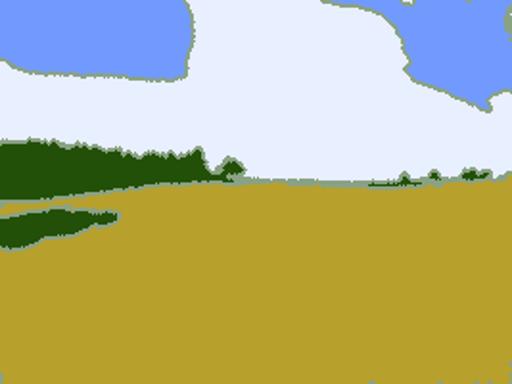} &
		\includegraphics[width=0.23\linewidth]{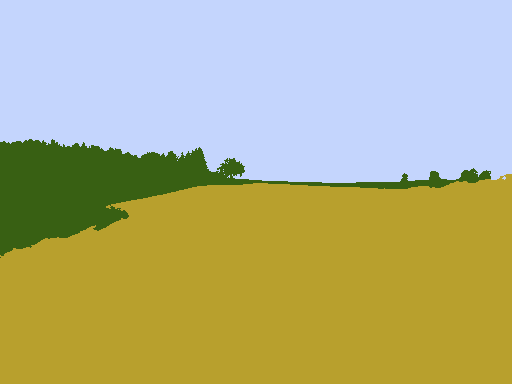} \\
		
		\includegraphics[width=0.23\linewidth]{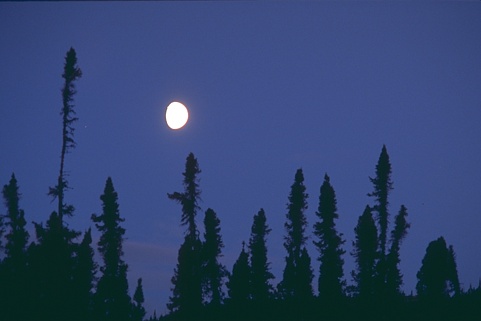} &
		\includegraphics[width=0.23\linewidth]{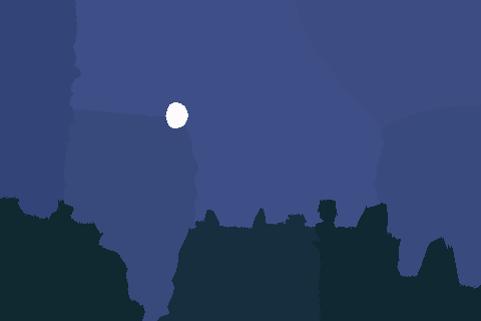} &
		\includegraphics[width=0.23\linewidth]{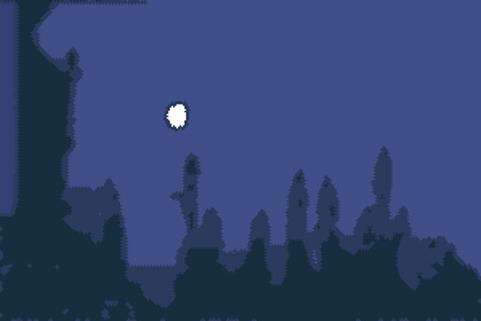} &
		\includegraphics[width=0.23\linewidth]{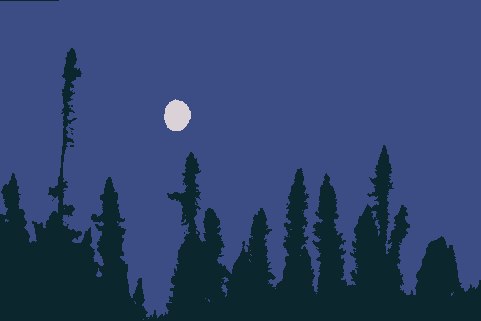} \\
		
		\includegraphics[width=0.23\linewidth]{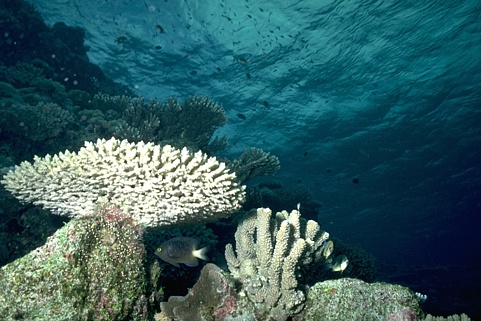} &
		\includegraphics[width=0.23\linewidth]{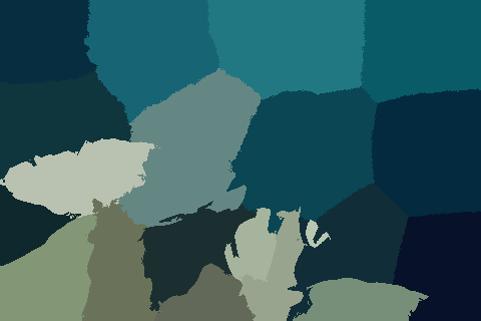} &
		\includegraphics[width=0.23\linewidth]{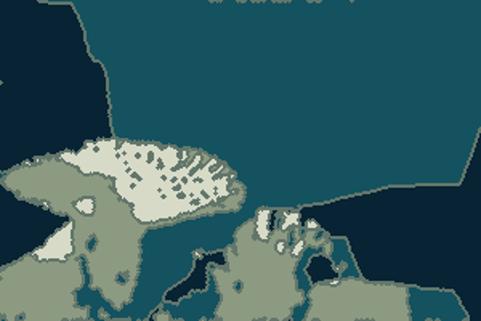} &
		\includegraphics[width=0.23\linewidth]{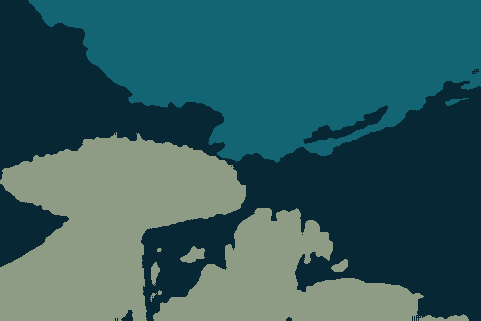} \\
		
		{\small{(a)} Inputs} & {\small{(b)} N-Cut~\cite{shi2000normalized}} 
		& {\small{(c)} CV~\cite{vese2002multiphase}} & {\small{(d)} Ours} 
	\end{tabular}
	\caption{Multi-phase image segmentation.}
	\label{fig: multiseg_results}
\end{figure}

\subsection{Multi-phase Segmentation}
Four images are chosen for testing the performance of the Normal Cut model~\cite{shi2000normalized}, the multi-phase version Chan-Vese model~\cite{vese2002multiphase}, and our proposed method. The results are shown in Figure~\ref{fig: multiseg_results}. For Normal Cut model~\cite{shi2000normalized}, we set the number of regions ranging from 3 to 20 and choose the best one. For the multi-phase Chan-Vese model~\cite{vese2002multiphase}, we use two level-set functions to represent the regions. For our model, we set the region's number to 4, 3, 3, 3 for the test images in Figure~\ref{fig: multiseg_results}. We find our model achieves a satisfactory result for all images while Normal Cut and Chan-Vese model are less accurate.

\begin{figure}
	\centering
	\begin{tabular}{c@{\hspace{0.005\linewidth}}c@{\hspace{0.005\linewidth}}c@{\hspace{0.005\linewidth}}c@{\hspace{0.005\linewidth}}c@{\hspace{0.005\linewidth}}c@{\hspace{0.005\linewidth}}c@{\hspace{0.005\linewidth}}c@{\hspace{0.005\linewidth}}c@{\hspace{0.005\linewidth}}c}
		\includegraphics[width=0.092\linewidth]{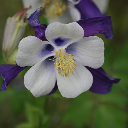}     &
		\includegraphics[width=0.092\linewidth]{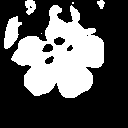} &
		\includegraphics[width=0.092\linewidth]{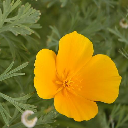}     &
		\includegraphics[width=0.092\linewidth]{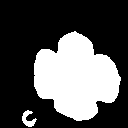} &
		\includegraphics[width=0.092\linewidth]{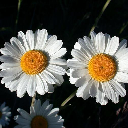}     &
		\includegraphics[width=0.092\linewidth]{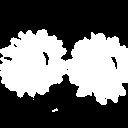} &
		\includegraphics[width=0.092\linewidth]{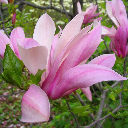}     &
		\includegraphics[width=0.092\linewidth]{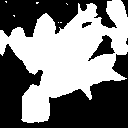} &
		\includegraphics[width=0.092\linewidth]{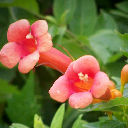}     &
		\includegraphics[width=0.092\linewidth]{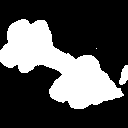} \\
		
		\includegraphics[width=0.092\linewidth]{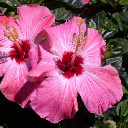}     &
		\includegraphics[width=0.092\linewidth]{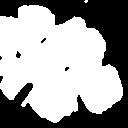} &
		\includegraphics[width=0.092\linewidth]{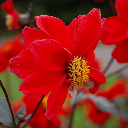}     &
		\includegraphics[width=0.092\linewidth]{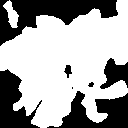} &
		\includegraphics[width=0.092\linewidth]{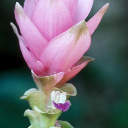}     &
		\includegraphics[width=0.092\linewidth]{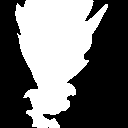} &
		\includegraphics[width=0.092\linewidth]{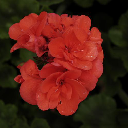}     &
		\includegraphics[width=0.092\linewidth]{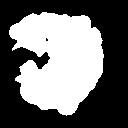} &
		\includegraphics[width=0.092\linewidth]{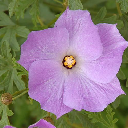}    &
		\includegraphics[width=0.092\linewidth]{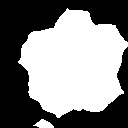}\\
		
		\includegraphics[width=0.092\linewidth]{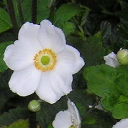}     &
		\includegraphics[width=0.092\linewidth]{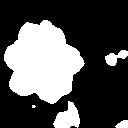} &
		\includegraphics[width=0.092\linewidth]{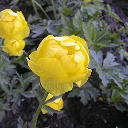}     &
		\includegraphics[width=0.092\linewidth]{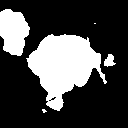} &
		\includegraphics[width=0.092\linewidth]{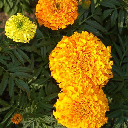}     &
		\includegraphics[width=0.092\linewidth]{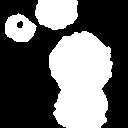} &
		\includegraphics[width=0.092\linewidth]{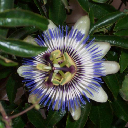}     &
		\includegraphics[width=0.092\linewidth]{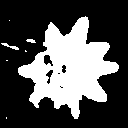} &
		\includegraphics[width=0.092\linewidth]{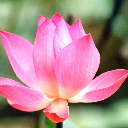}     &
		\includegraphics[width=0.092\linewidth]{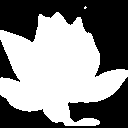} \\
		
		\includegraphics[width=0.092\linewidth]{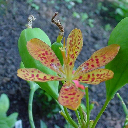}     &
		\includegraphics[width=0.092\linewidth]{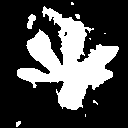} &
		\includegraphics[width=0.092\linewidth]{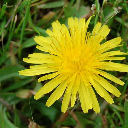}     &
		\includegraphics[width=0.092\linewidth]{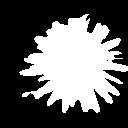} &
		\includegraphics[width=0.092\linewidth]{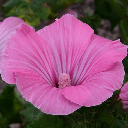}     &
		\includegraphics[width=0.092\linewidth]{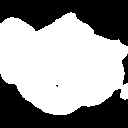} &
		\includegraphics[width=0.092\linewidth]{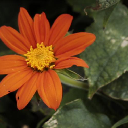}     &
		\includegraphics[width=0.092\linewidth]{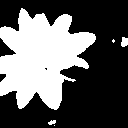} &
		\includegraphics[width=0.092\linewidth]{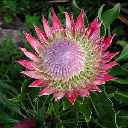}     &
		\includegraphics[width=0.092\linewidth]{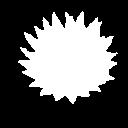} \\
	\end{tabular}
	\caption{Visual results of our dataset segmentation method on the Flower test dataset.}
	\label{flower_visual}
\end{figure}

\subsection{Dataset Based Segmentation}
We evaluate our dataset based segmentation method on the Flower dataset~\cite{nilsback2007delving,nilsback2008automated}. The dataset is provided with a set of masks obtained via an automated method built specifically for flowers~\cite{nilsback2007delving}. We split into sets of 6149 training images, 1020 validation, and 1020 test images, same as in ReDO~\cite{chen2019unsupervised}, and use the provided masks as ground truth for evaluation purposes only. All images have been resized and then cropped to $128 \times 128$. To evaluate our method, we use the commonly used metrics as ReDO~\cite{chen2019unsupervised}. The pixel classification accuracy (Acc) measures the proportion of pixels that have been assigned to the correct region. The mean intersection over union (mIoU) is the ratio between the intersection area between the inferred mask and the ground truth over the area of their union. In both cases, higher is better. We train our model on the training set without the ground truth mask and save the models with the highest mIoU score on the validation set within 50 epochs. For quantitative analysis, we compare our method with the Chan-Vese method~\cite{chan2001active}, and the ReDO method~\cite{chen2019unsupervised}, see Table~\ref{Flower_result} for the results. From the table, we find our method is greatly improved compared to the Chan-Vese method and achieves better results than the recently proposed method ReDO on both training and testing sets. Moreover, after the training process, our method inference much faster than the original Chan-Vese method. To process one $128 \times 128$ images, Chan-Vese takes 1.95s, while ours is 0.02s. See Figure~\ref{flower_visual} for the visual results of our method.

\noindent{\bf Ablation Study.} We use ablation experiments to analyze the effect of the regularization used in our method. For each experiment we train a individual model and evaluate its performance on the Flower dataset using the same settings as in the main experiment. See Table~\ref{ablation_study} for the results, where "AuI" represents "Augmentation Invariant", and "CRI" represents "Conservation of Region Information". From the table, we find both constraints have relatively little influence on the final results. This is because the purpose of these constraints is to prevent model collapse. For example, the network $\mathcal{U}$ output empty foreground or background for all inputs, then model will degenerate to the traditional VAE model, where the "Conservation of Region Information" constraint can prevent this situation.

\begin{table}
	\centering
	\caption{Segmentation results on on Flower dataset}
	\label{Flower_result}
	\begin{tabular}{ccccc}
		\hline
		& Train Acc & Train mIoU & Test Acc & Test mIoU \\ \hline
		CV~\cite{chan2001active} & 0.569     & 0.366      & 0.567    & 0.357     \\ \hline
		ReDO~\cite{chen2019unsupervised}      & 0.886     & 0.789      & 0.879    & 0.764     \\ \hline
		Ours      & \textbf{0.901}     & \textbf{0.796}      & \textbf{0.891}    & \textbf{0.778}     \\ \hline
	\end{tabular}
\end{table}

\begin{table}[]
	\centering
	\caption{Ablation study on the Flower dataset.}
	\label{ablation_study}
	\begin{tabular}{cccccc}
		\hline
		AuI          &    CRI       & Train Acc & Train mIoU & Test Acc & Test mIoU \\ \hline
		             &              & 0.887          & 0.778          & 0.879          & 0.761          \\ \hline
		$\checkmark$ &              & 0.889          & 0.782          & 0.882          & 0.767          \\ \hline
		             & $\checkmark$ & 0.894          & 0.785          & 0.886          & 0.766          \\ \hline
		$\checkmark$ & $\checkmark$ & \textbf{0.901} & \textbf{0.796} & \textbf{0.891} & \textbf{0.778} \\ \hline
	\end{tabular}
\end{table}

\begin{figure}
	\centering
	\begin{tabular}{c@{\hspace{0.01\linewidth}}c@{\hspace{0.01\linewidth}}c@{\hspace{0.01\linewidth}}c@{\hspace{0.01\linewidth}}c@{\hspace{0.01\linewidth}}c}
		\includegraphics[width=0.15\linewidth]{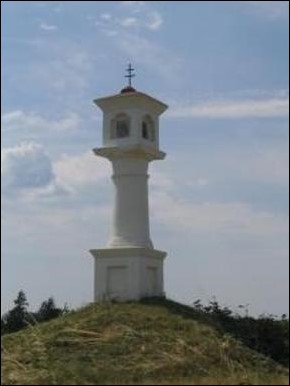} &
		\includegraphics[width=0.15\linewidth]{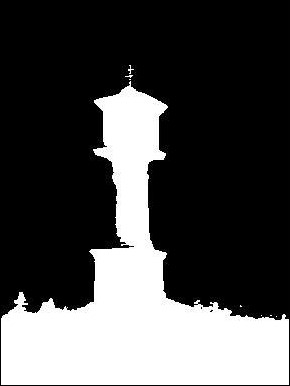} &
		\includegraphics[width=0.15\linewidth]{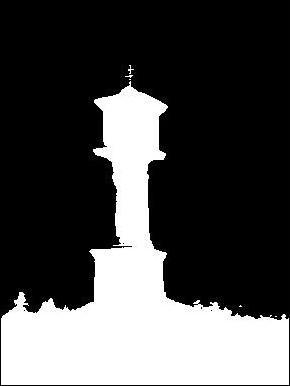} &
		\includegraphics[width=0.15\linewidth]{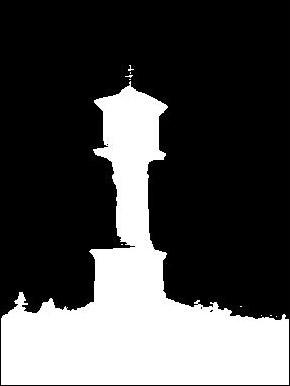} &
		\includegraphics[width=0.15\linewidth]{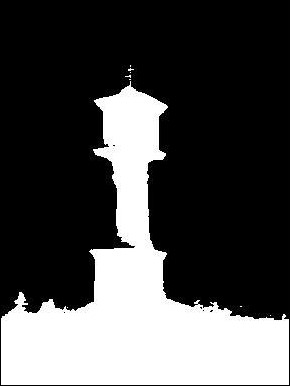} &
		\includegraphics[width=0.15\linewidth]{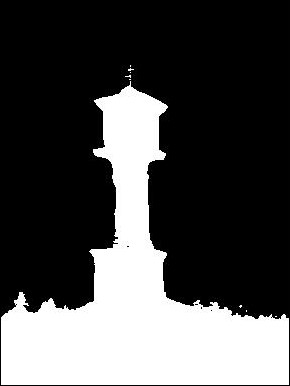} \\
		\includegraphics[width=0.15\linewidth]{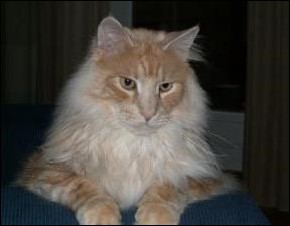} &
		\includegraphics[width=0.15\linewidth]{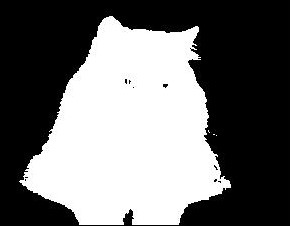} &
		\includegraphics[width=0.15\linewidth]{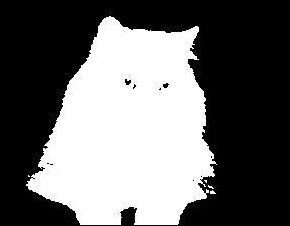} &
		\includegraphics[width=0.15\linewidth]{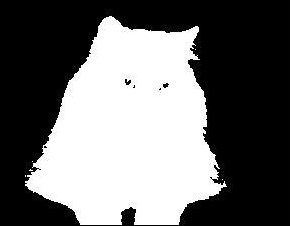} &
		\includegraphics[width=0.15\linewidth]{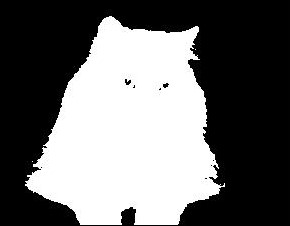} &
		\includegraphics[width=0.15\linewidth]{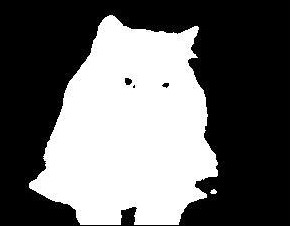} \\
		{\small{(a)}} & {\small{(b)}} 
		& {\small{(c)}} & 
		{\small{(d)}} & {\small{(e)}} &{\small{(f)}}
	\end{tabular}
	\caption{ Segmentation results with different architectures of decoder/encoder maps. (a) Input images. (b)-(e) Network \cite{ulyanov2018deep} with depth $2$-$5$. (f) U-net \cite{ronneberger2015u}.}
	\label{fig: net_robustness}
\end{figure}

\begin{figure}
	\centering
	\begin{tabular}{c@{\hspace{0.004\linewidth}}c@{\hspace{0.004\linewidth}}c@{\hspace{0.004\linewidth}}c@{\hspace{0.004\linewidth}}c@{\hspace{0.004\linewidth}}c@{\hspace{0.004\linewidth}}c@{\hspace{0.004\linewidth}}c@{\hspace{0.004\linewidth}}c@{\hspace{0.004\linewidth}}c}
		\includegraphics[width=0.092\linewidth]{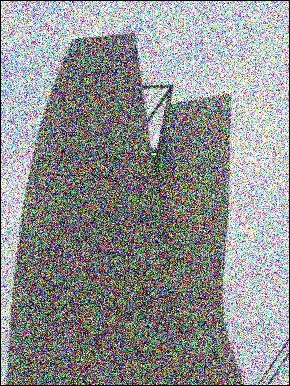} &
		\includegraphics[width=0.092\linewidth]{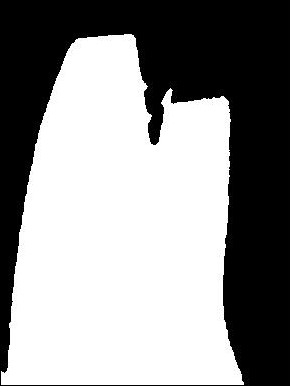} &
		\includegraphics[width=0.092\linewidth]{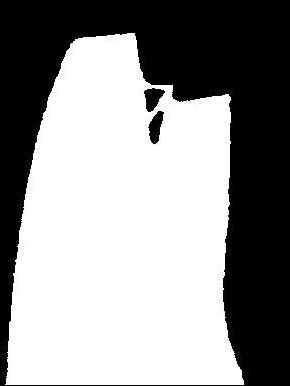} &
		\includegraphics[width=0.092\linewidth]{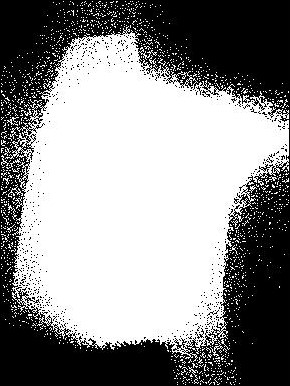} &
		\includegraphics[width=0.092\linewidth]{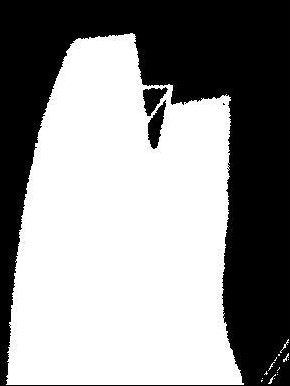} &
		\includegraphics[width=0.092\linewidth]{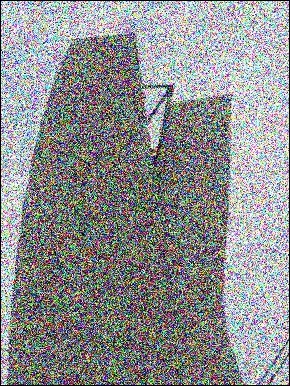} &
		\includegraphics[width=0.092\linewidth]{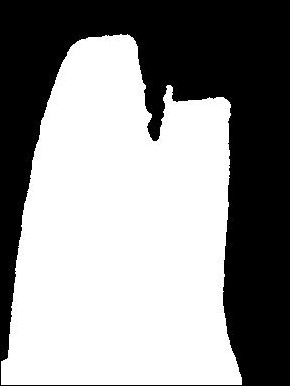} &
		\includegraphics[width=0.092\linewidth]{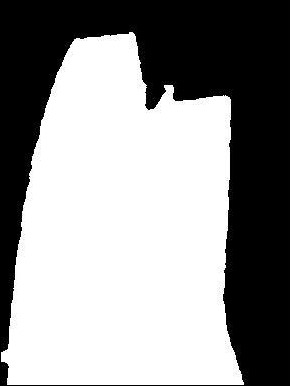} &
		\includegraphics[width=0.092\linewidth]{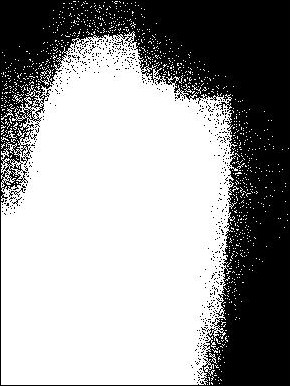} &
		\includegraphics[width=0.092\linewidth]{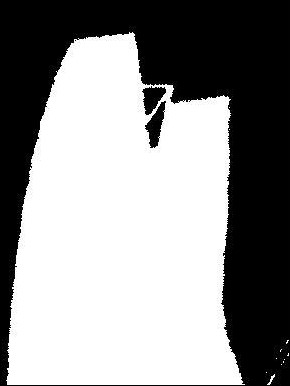} \\
		
		\scriptsize{$\sigma=100$} & \scriptsize{CV~\cite{chan2001active}} & \scriptsize{SA~\cite{cremers2007review}} & \scriptsize{D-DIP~\cite{gandelsman2018double}} & \scriptsize{Ours} & \scriptsize{$\sigma=120$} & \scriptsize{CV~\cite{chan2001active}} & \scriptsize{SA~\cite{cremers2007review}} & \scriptsize{D-DIP~\cite{gandelsman2018double}} & \scriptsize{Ours} \\
		
		\includegraphics[width=0.092\linewidth]{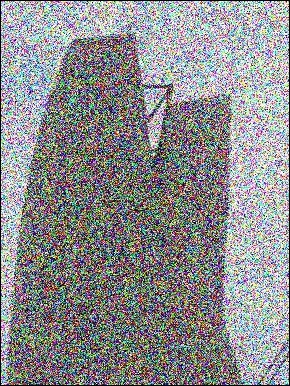} &
		\includegraphics[width=0.092\linewidth]{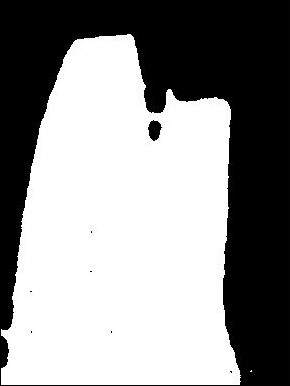} &
		\includegraphics[width=0.092\linewidth]{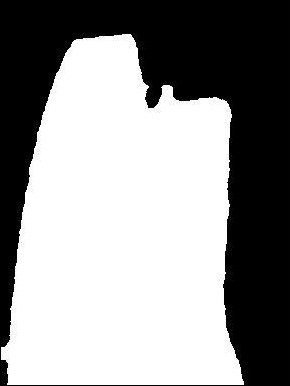} &
		\includegraphics[width=0.092\linewidth]{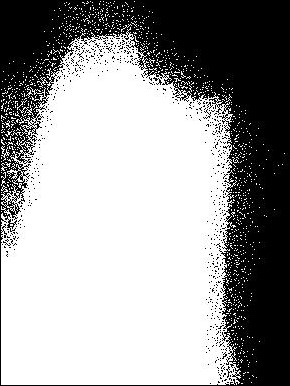} &
		\includegraphics[width=0.092\linewidth]{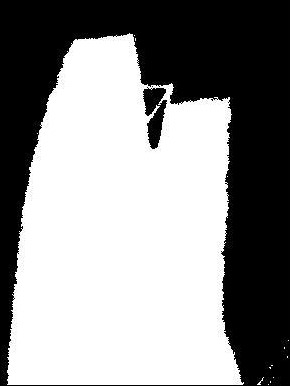}  &
		\includegraphics[width=0.092\linewidth]{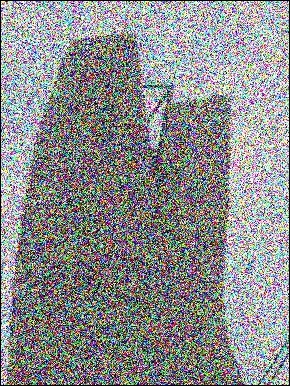} &
		\includegraphics[width=0.092\linewidth]{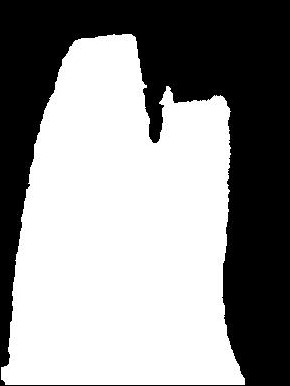} &
		\includegraphics[width=0.092\linewidth]{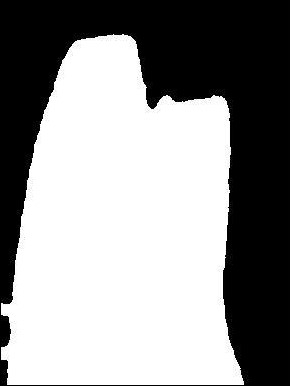} &
		\includegraphics[width=0.092\linewidth]{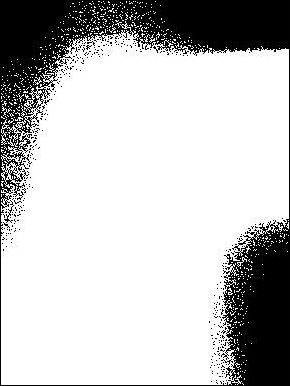} &
		\includegraphics[width=0.092\linewidth]{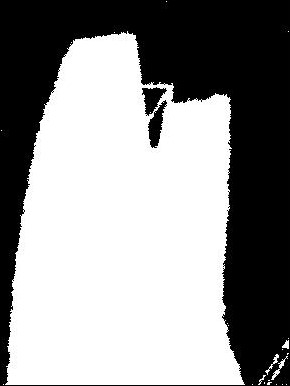}  \\
		
		\scriptsize{$\sigma=140$} & \scriptsize{CV~\cite{chan2001active}} & \scriptsize{SA~\cite{cremers2007review}} & \scriptsize{D-DIP~\cite{gandelsman2018double}} & \scriptsize{Ours} & \scriptsize{$\sigma=160$} & \scriptsize{CV~\cite{chan2001active}} & \scriptsize{SA~\cite{cremers2007review}} & \scriptsize{D-DIP~\cite{gandelsman2018double}} & \scriptsize{Ours} 
	\end{tabular}
	\caption{Noisy image segmentation.}
	\label{fig:gau_noise_seg}
\end{figure}

\section{Discussion}\label{discussion}
In this section, we evaluate the performance of our segmentation method in the following six perspectives:

\noindent{\bf \underline{Different architectures.}} Two networks~\cite{ulyanov2018deep} and~\cite{ronneberger2015u} as the decoder and encoder maps i.e., $\mathcal{F}$, $\mathcal{G}$, are tested, the results are shown in Figure~\ref{fig: net_robustness}. For network~\cite{ulyanov2018deep}, we test for different network depths from 2 to 5 while for network~\cite{ronneberger2015u} we use the standard network structure of the original model. It is shown that the segmentation results are stable as the depth or architecture of the network varies.

\begin{figure}
	\centering
	\begin{tabular}{c@{\hspace{0.005\linewidth}}c@{\hspace{0.005\linewidth}}c@{\hspace{0.005\linewidth}}c@{\hspace{0.005\linewidth}}c@{\hspace{0.005\linewidth}}c@{\hspace{0.005\linewidth}}c@{\hspace{0.005\linewidth}}c}
		\includegraphics[width=0.115\linewidth,height=0.09\linewidth]{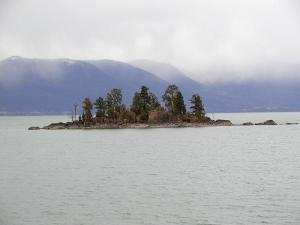} & 
		\includegraphics[width=0.115\linewidth,height=0.09\linewidth]{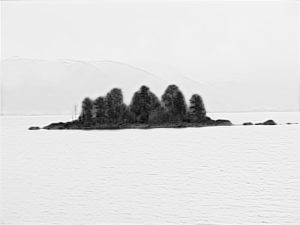} & 
		\includegraphics[width=0.115\linewidth,height=0.09\linewidth]{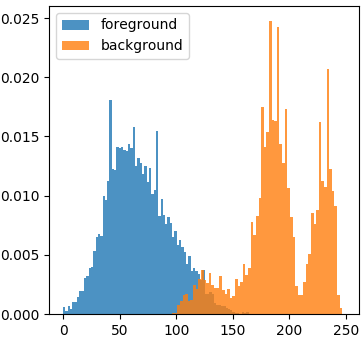} & 
		\includegraphics[width=0.115\linewidth,height=0.09\linewidth]{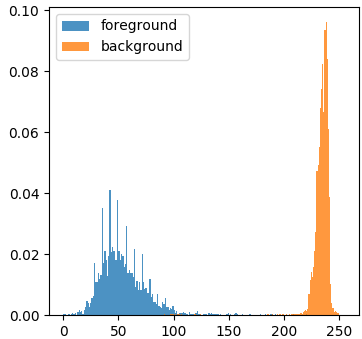} &
		\includegraphics[width=0.115\linewidth,height=0.09\linewidth]{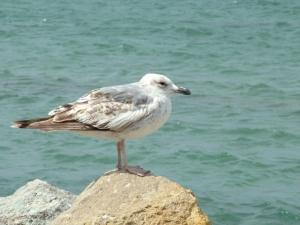} & 
		\includegraphics[width=0.115\linewidth,height=0.09\linewidth]{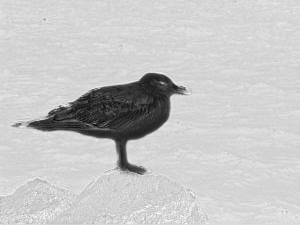} & 
		\includegraphics[width=0.115\linewidth,height=0.09\linewidth]{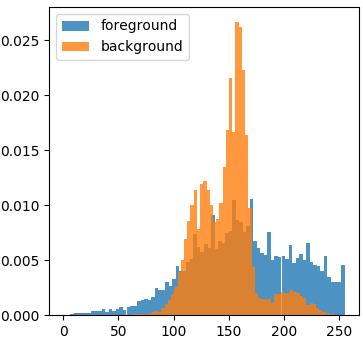} & 
		\includegraphics[width=0.115\linewidth,height=0.09\linewidth]{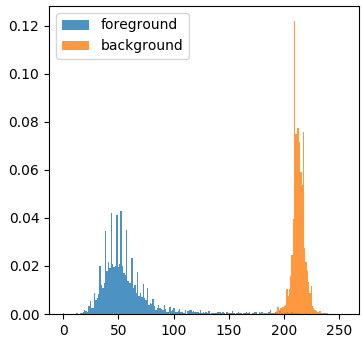} \\
		
		\includegraphics[width=0.115\linewidth,height=0.09\linewidth]{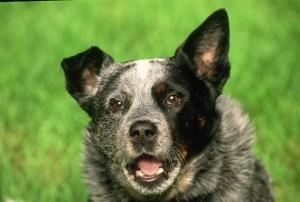} & 
		\includegraphics[width=0.115\linewidth,height=0.09\linewidth]{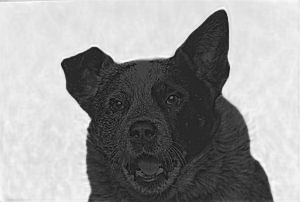} & 
		\includegraphics[width=0.115\linewidth,height=0.09\linewidth]{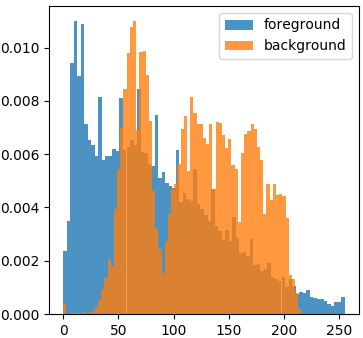} & 
		\includegraphics[width=0.115\linewidth,height=0.09\linewidth]{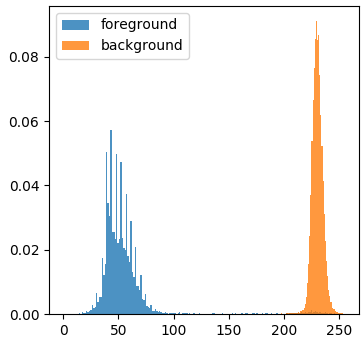} &
		\includegraphics[width=0.115\linewidth,height=0.09\linewidth]{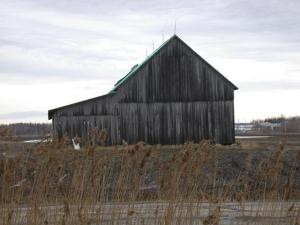} & 
		\includegraphics[width=0.115\linewidth,height=0.09\linewidth]{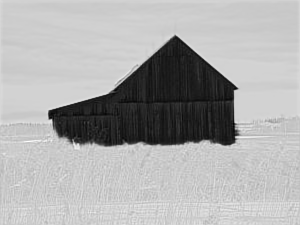} & 
		\includegraphics[width=0.115\linewidth,height=0.09\linewidth]{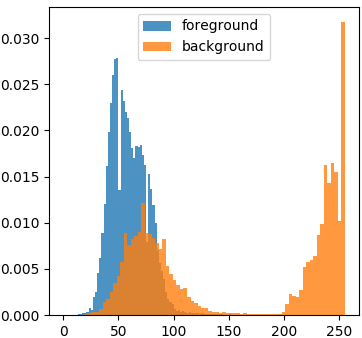} & 
		\includegraphics[width=0.115\linewidth,height=0.09\linewidth]{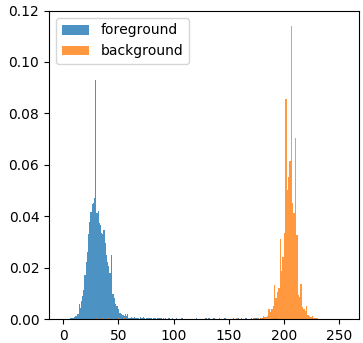} \\
		
		\small{(a1)} & \small{(b1)} & \small{(c1)} & \small{(d1)} & \small{(a2)} & \small{(b2)} & \small{(c2)} & \small{(d2)}
	\end{tabular}
	\caption{Visualization of latent representations. (a1), (a2) input images. (b1), (b2) latent representations. (c1), (c2), (d1), (d2) foreground/background distributions of input images and latent representations.}
	\label{fig:latent}
\end{figure}

\begin{figure}
	\centering
	\begin{tabular}{c@{\hspace{0.005\linewidth}}c@{\hspace{0.005\linewidth}}c@{\hspace{0.005\linewidth}}c@{\hspace{0.005\linewidth}}c@{\hspace{0.005\linewidth}}c@{\hspace{0.005\linewidth}}c@{\hspace{0.005\linewidth}}c}
		\includegraphics[width=0.115\linewidth]{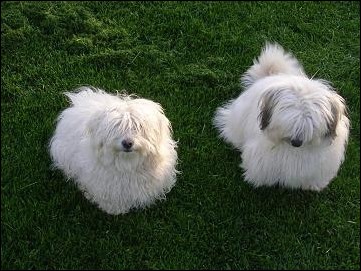} & 
		\includegraphics[width=0.115\linewidth]{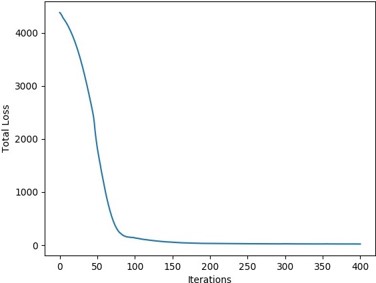} &
		\includegraphics[width=0.115\linewidth]{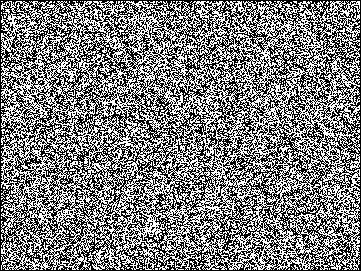} &
		\includegraphics[width=0.115\linewidth]{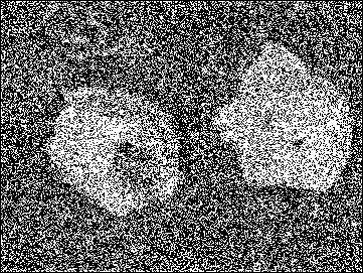} &
		\includegraphics[width=0.115\linewidth]{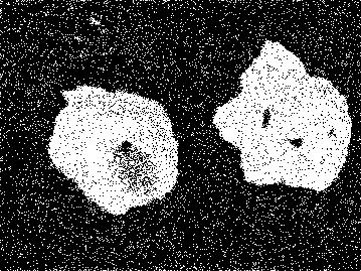} &
		\includegraphics[width=0.115\linewidth]{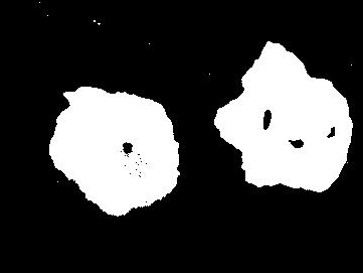} &
		\includegraphics[width=0.115\linewidth]{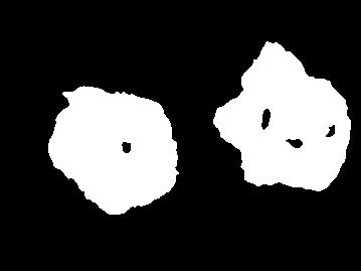} &
		\includegraphics[width=0.115\linewidth]{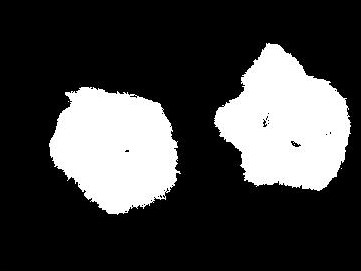} \\
		
		\includegraphics[width=0.115\linewidth]{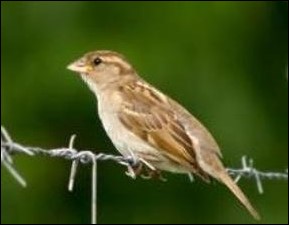} & 
		\includegraphics[width=0.115\linewidth]{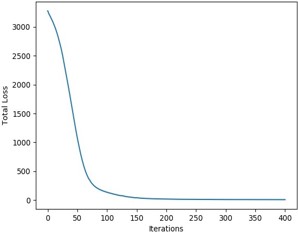} &
		\includegraphics[width=0.115\linewidth]{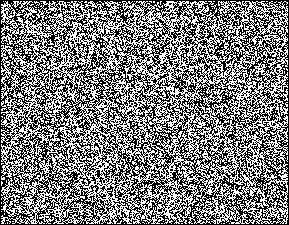} &
		\includegraphics[width=0.115\linewidth]{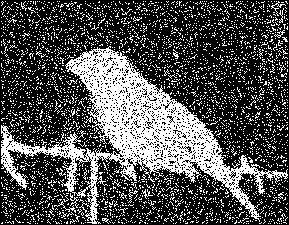} &
		\includegraphics[width=0.115\linewidth]{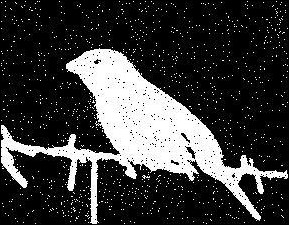} &
		\includegraphics[width=0.115\linewidth]{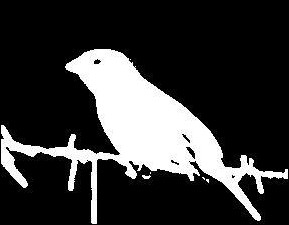} &
		\includegraphics[width=0.115\linewidth]{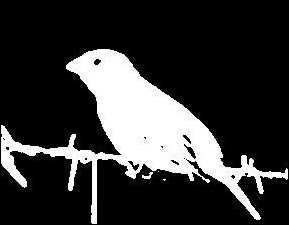} &
		\includegraphics[width=0.115\linewidth]{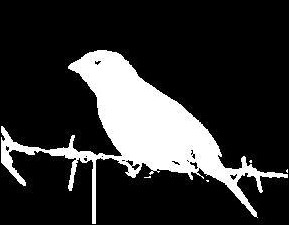} \\
		
		\small{(a)} & \small{(b)} & \small{(c)} & \small{(d)} & \small{(e)} & \small{(f)} & \small{(g)} & \small{(h)}
	\end{tabular}
	\caption{Segmentation process. (a) input images. (b) loss curves. (c)-(g) results of 0 iteration, 10 iteration, 20 iteration, 40 interaction and 100 iteration. (h) final results.}
	\label{fig:loss_curve}
\end{figure}

\noindent{\bf\underline{Latent space visualization.}} As the latent dimension is set as 1 in fg/bg segmentation, we can visualize the latent space representations of the Weizmann dataset images in Figure~\ref{fig:latent}. In latent space, the contrast between fg and bg is magnified by the encoding map $\mathcal{G}$. This magnification ignores irrelevant contents in intensity space and simplifies the representations, crucial for accurate segmentation. Also, we see the fg/bg distributions in image space are not Gaussian while fg/bg of the latent representations are more like Gaussian distributions.

\noindent{\bf\underline{Noisy image segmentation.}} Let input images (range from 0 to 255) be corrupted by the additive Gaussian noise with mean zero and $\sigma=$100, 120, 140 and 160. We compare our model with Chan-Vese model~\cite{chan2001active}, statistical model~\cite{cremers2007review} and Double-DIP method~\cite{gandelsman2018double}. Due to noise, all the compared methods failed while our method can still capture enough details for all noise levels as shown in Figure~\ref{fig:gau_noise_seg}.

\noindent{\bf\underline{The dynamics of the proposed method.}} The segmentation process of our approach is shown in Figure~\ref{fig:loss_curve}. The total loss has the energy dissipation property, and the result is improving as the iteration increases, which verifies that our algorithm is stable and convergent numerically.

\begin{figure}
    \centering
    \begin{tabular}{cc}
        \includegraphics[width=0.45\linewidth]{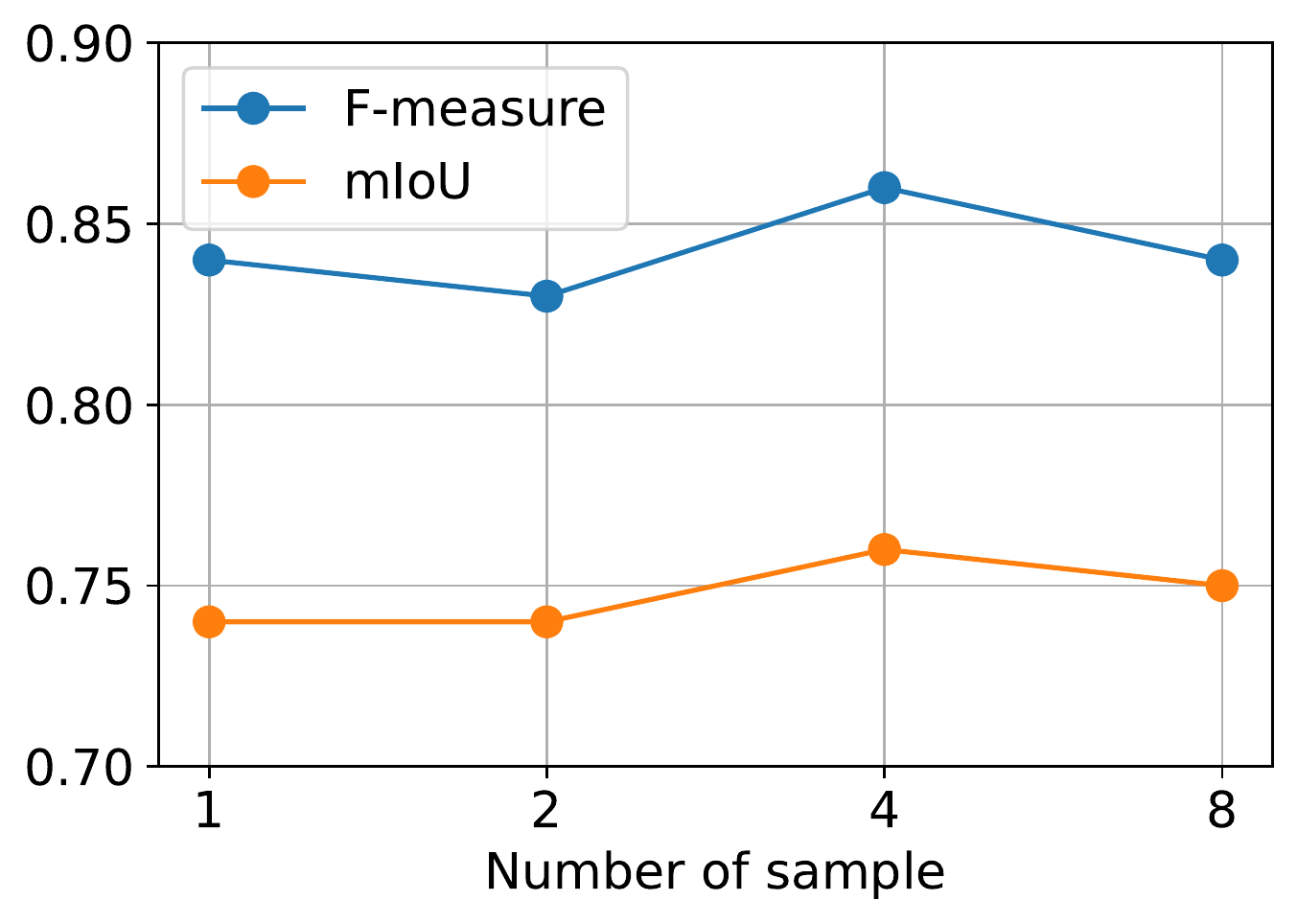} &
        \includegraphics[width=0.45\linewidth]{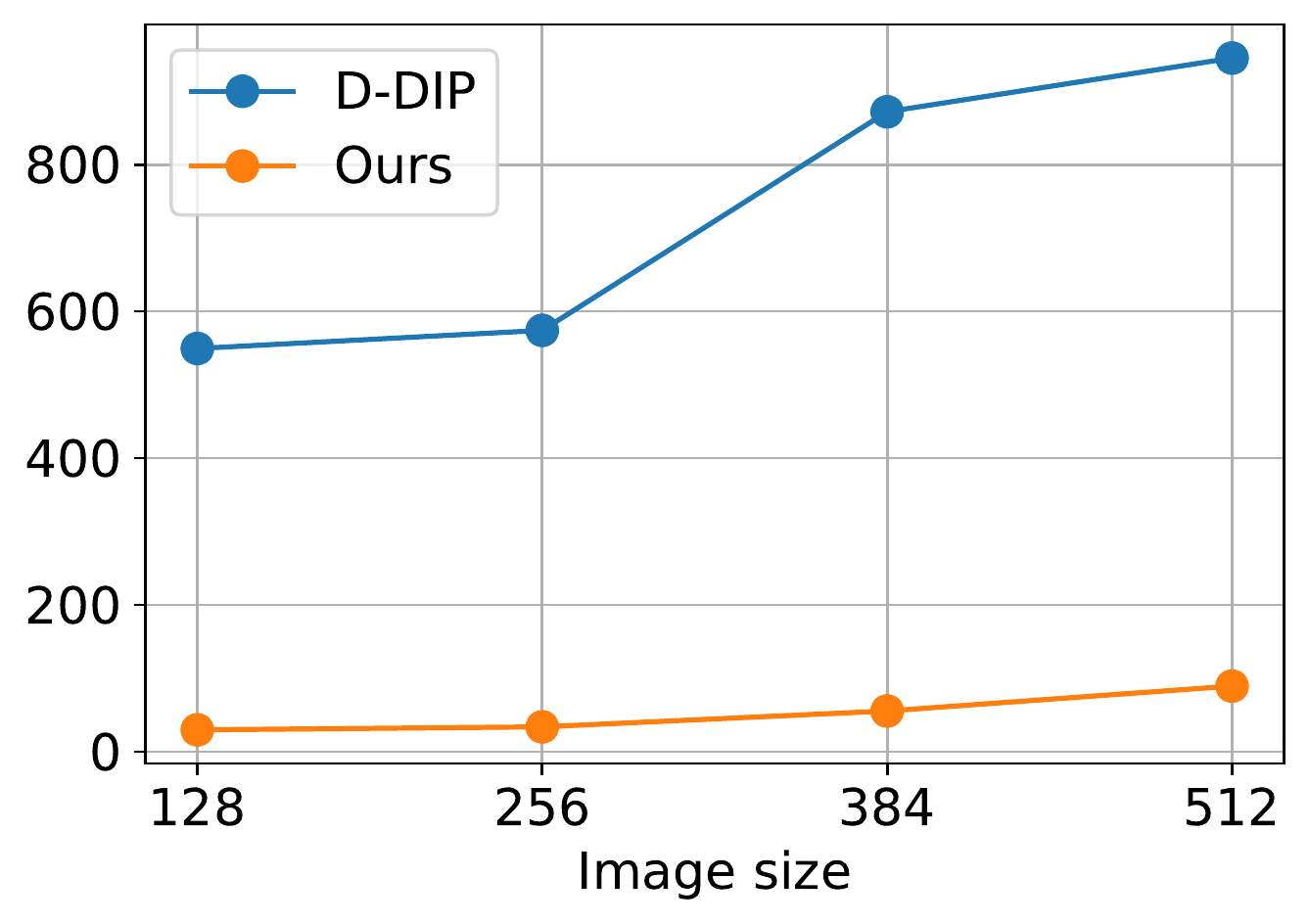} \\
        \small{(a)} & \small{(b)} \\
    \end{tabular}
    \caption{(a) segmentation results of different number of samples of $\eta$ on 20 images from Weizmann database. (b) total time on a  GTX 1080 TI.}
    \label{ns_tc}
\end{figure}

\noindent{\bf\underline{The Monte-Carlo sampling number for $\eta$.}}
We use the Monte-Carlo method to sample $\eta$ and estimate the expectation in the reconstruction term. The sampling number is set to 1 as suggested by the classical VAE method~\cite{kingma2013auto}. In the following experiment, we randomly choose 20 images from the Weizmann dataset~\cite{AlpertGBB07} and test the performance with different sampling numbers of $\eta$. The result is given in Figure~\ref{ns_tc} (a) and shows that increasing the sampling numbers of $\eta$ will not greatly increase the segmentation accuracy.

\noindent{\bf\underline{Running time.}} The relationship between the running time and image size is reported in Figure~\ref{ns_tc} (b) by using a single NVIDIA GeForce GTX 1080 Ti GPU. Comparing to the  Double-DIP~\cite{gandelsman2018double} model, we need less time for all image sizes. In particular, for $128\times128$ image, we need 29.73s while Double-DIP~\cite{gandelsman2018double} need 549.39s, which brings problems to actual use; also, the time increasing with image sizes in Double-DIP~\cite{gandelsman2018double} grows more drastically than our method.

\section{Conclusion}\label{conclusion}
This paper proposes the deep CV model, a variational inference based approach for unsupervised image segmentation by combining the traditional segmentation model with deep neural networks. Under the framework of variational inference, each term in the proposed objective function can be well explained, and the latent space assumption can be numerically verified. Experiments show that our proposed method is robust for the choice of architecture and noise corruption. Moreover, this idea can be extended to multi-phase segmentation and dataset based segmentation, experiment results show the promising performance of the proposed methods.

\section*{Acknowledgments}
Chenglong Bao was supported by the National Key R\&D Program of China (No.2021YFA1001300), National Natural Science Foundation of China (No.11901338), Tsinghua University Initiative Scientific Research Program. Zuoqiang Shi was supported by the National Natural Science Foundation of China (No.12071244).

\bibliographystyle{siam}
\bibliography{refs}

\section*{Appendix}
\subsection{Derivation of Proposition~\ref{MainProp}}\label{proof_of_main_prop}
\begin{proof} 
	We first estimate the first term in ELBO~\eqref{ELBO}. Using the the reparameterization trick~\cite{kingma2013auto}, it has
	\begin{equation}
		Z\mid I = \mathcal{G}^{\mu}(I) + \sqrt{\mathcal{G}^{\sigma}(I)}\eta,
	\end{equation}
	where $\eta \sim \mathcal{N}(0,\mathbf{I})$. Then we have
	\begin{equation}
		\begin{aligned}
			E_{q(Z\mid I)}\ln p(I\mid Z,u)
			&= E_{q(Z\mid I)}\ln p(I\mid Z) \\
			&= -\frac{1}{2}E_{q(Z\mid I)} \|\mathcal{F}(Z) - I\|^2 + c \\
			&= -\frac{1}{2}E_{\eta} \|\mathcal{F}(\mathcal{G}^{\mu}(I)+ \sqrt{\mathcal{G}^{\sigma}(I)}\eta) - I\|^2 + c, \\
		\end{aligned}
	\end{equation}
	where $c=-\ln\sqrt{2\pi}nm$. The second term is the KL divergence between two Gaussian distributions, $q(Z\mid I) = \prod_x q(Z_x\mid I)=\prod_x \mathcal{N}(\mathcal{G}^{\mu}(I)_x, \mathcal{G}^{\sigma}(I)_x))$ and $p(Z\mid u)=\prod_x p(Z_x\mid u_x)$, where $	 p(Z_x\mid u_x) = 
    \begin{cases}
    \mathcal N(\mu_1, \Sigma_1), \mbox{ if } u_x = 1 \\
    \mathcal N(\mu_2, \Sigma_2), \mbox{ if } u_x = 0
    \end{cases}$, then
	\begin{equation}
		\mathrm{KL}(q(Z\mid I)\|p(Z\mid u)) = \sum_{x\in \Omega} \mathrm{KL}(q(Z_x\mid I)\|p(Z_x\mid u_x)).
	\end{equation}
	Then $\forall x \in \Omega$,
    \begin{equation}
        \mathrm{KL}(q(Z_x\mid I)\|p(Z_x|u_x)) = \mathrm{KL}(\mathcal{N}(\mathcal{G}^{\mu}(I)_x, \mathcal{G}^{\sigma}(I)_x)) \| \mathcal{N}(\mu_i, \Sigma_i))
    \end{equation}
    which has a closed form solution~\cite{kingma2013auto}:
    \begin{equation}
        \frac{1}{2} \left(\ln \frac{|\Sigma_i|}{|\mathcal{G}^{\sigma}(I)_x|} - d + \operatorname{tr} (\Sigma_i^{-1}\mathcal{G}^{\sigma}(I)_x) + (\mathcal{G}^{\mu}(I)_x - \mu_i)^{T} \Sigma_i^{-1} (\mathcal{G}^{\mu}(I)_x - \mu_i)\right),
    \end{equation}
    where $i=1$ if $u_x=1$, $i=2$ if $u_x=0$. Sum over indexes $x \in \Omega$, the KL term is
	\begin{equation}
			\mathrm{KL}(q(Z\mid I)\|p(Z\mid u)) = \sum_{x\in \Omega} u_x\mathrm{KL}_x^{\Omega_1} + (1-u_x)\mathrm{KL}_{x}^{\Omega_2},
	\end{equation}
	where
    \begin{equation}
        \mathrm{KL}_{x}^{\Omega_i} = \frac{1}{2} \left(\ln \frac{|\Sigma_i|}{|\mathcal{G}^{\sigma}(I)_x|} - d + \operatorname{tr} (\Sigma_i^{-1}\mathcal{G}^{\sigma}(I)_x) + (\mathcal{G}^{\mu}(I)_x - \mu_i)^{T} \Sigma_i^{-1} (\mathcal{G}^{\mu}(I)_x - \mu_i)\right), i=1,2.
    \end{equation}
	 So the proposition holds.
\end{proof}

\subsection{Derivation of Theorem~\ref{Thm:Seqconver}}\label{proof_main_thm}
In addition, the encoder and decoder networks are differentiable when choosing a differentiable activation function such as sigmoid function, CReLU. 
\begin{prop}\label{pro:seqproperty}
	Suppose Assumption~\ref{assum1} holds. Let $\{x^k\}=\{(\theta^k,\gamma^k,\phi^k,w^k)\}$ be the sequence generated by Algorithm~\ref{alg:VAE_seg} and $\alpha_i\leq 2/L_M$ for $i=1,2,3$. Then, there exist $c_0,c_1>0$ such that
	\begin{align}
		&E_{\LS} (x^k)-E_{\LS}(x^{k+1})\geq c_0\|x^k-x^{k+1}\|^2, \label{energydescent}\\
		&\dist(\vzero,\partial E_{\LS}(x^{k+1}))\leq c_1\|x^k-x^{k+1}\|,\label{gradbound}
	\end{align}
	where $ \dist( \vzero, \partial E_{\LS}(x^{k+1})) :=\inf\{\|v\|:v\in\partial E_{\LS}(x^{k+1})\} $.
\end{prop}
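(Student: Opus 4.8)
The plan is to recognize Algorithm~\ref{alg:VAE_seg} as a two-block proximal-gradient (PALM-type) scheme in the sense of~\cite{bolte2014proximal}: the smooth block $y=(\theta,\gamma,\phi)$ is updated by a joint gradient-descent step with $w$ frozen, while the nonsmooth block $w$ is updated by the exact minimization~\eqref{sub:w}, whose closed form is Proposition~\ref{cal_length}. Accordingly I split $E_{\LS}$ into a smooth data/coupling part and the nonsmooth term $\nu\|w\|_{1,2}$, and verify the sufficient-decrease inequality~\eqref{energydescent} and the relative-error inequality~\eqref{gradbound} block by block before combining them.

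For~\eqref{energydescent} I would treat the two half-steps separately. Since Assumption~\ref{assum1} forces every iterate into the bounded set $\mathcal{M}$ on which $\nabla_{\theta,\gamma,\phi}E_{\LS}$ is $L_M$-Lipschitz (with $w=w^k$ fixed), the descent lemma applied to the joint gradient step gives, for step sizes $\alpha_i\le 2/L_M$, a decrease of at least $(\min_i\alpha_i^{-1}-L_M/2)\|y^{k+1}-y^k\|^2$ after the $(\theta,\gamma,\phi)$-update, where I use $\langle g, D g\rangle\ge (\max_i\alpha_i)^{-1}\|Dg\|^2$ for the diagonal step-size matrix $D$. Then, fixing $y^{k+1}$ and updating $w$, the subproblem~\eqref{sub:w} is $\lambda$-strongly convex (the quadratic penalty supplies strong convexity and $\|\cdot\|_{1,2}$ is convex), so optimality of $w^{k+1}$ yields a further decrease of at least $\frac{\lambda}{2}\|w^{k+1}-w^k\|_2^2$; because the reconstruction and KL terms do not involve $w$, this is exactly the drop in $E_{\LS}$ across the $w$-step. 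Summing the two inequalities gives~\eqref{energydescent} with $c_0=\min(\min_i\alpha_i^{-1}-L_M/2,\ \lambda/2)$.

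For~\eqref{gradbound} I would exhibit an explicit element of $\partial E_{\LS}(x^{k+1})$ of the required size. The first-order optimality condition for~\eqref{sub:w} reads $\vzero\in\nu\,\partial\|w^{k+1}\|_{1,2}+\lambda(w^{k+1}-\nabla S(\phi^{k+1}))$, which is precisely the $w$-component of $\partial E_{\LS}(x^{k+1})$; hence I may pick a subgradient whose $w$-block is $\vzero$. For the smooth block I rewrite each gradient at the new point via the update rule: since $\nabla_\bullet E_{\LS}(\theta^k,\gamma^k,\phi^k,w^k)=\alpha_\bullet^{-1}(\bullet^k-\bullet^{k+1})$ for $\bullet\in\{\theta,\gamma,\phi\}$, I have $\nabla_\bullet E_{\LS}(x^{k+1})=\alpha_\bullet^{-1}(\bullet^k-\bullet^{k+1})+\big(\nabla_\bullet E_{\LS}(x^{k+1})-\nabla_\bullet E_{\LS}(\theta^k,\gamma^k,\phi^k,w^k)\big)$, and the bracketed difference is controlled by Lipschitz continuity on $\mathcal{M}$. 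Collecting these bounds, the chosen subgradient has norm at most $c_1\|x^{k+1}-x^k\|$, which is~\eqref{gradbound}.

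The main obstacle is the coupling term $\frac{\lambda}{2}\|w-\nabla S(\phi)\|_2^2$, which makes $\nabla_\phi E_{\LS}$ depend on $w$: in the last step the two arguments differ in $w$ as well as in $(\theta,\gamma,\phi)$, whereas Assumption~\ref{assum1} provides Lipschitz continuity only in $(\theta,\gamma,\phi)$ with $w$ held fixed. I therefore need an auxiliary estimate of the form $\|\nabla_\phi E_{\LS}(\cdot,w_1)-\nabla_\phi E_{\LS}(\cdot,w_2)\|\le L_w\|w_1-w_2\|$; this is harmless because that term is quadratic in $w$, so its $\phi$-gradient is affine in $w$, and on the bounded set $\mathcal{M}$ both $S'$ and the discrete operator $\nabla$ are bounded, giving a finite $L_w$. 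Absorbing the extra $L_w\|w^{k+1}-w^k\|$ term keeps the right-hand side of the form $c_1\|x^{k+1}-x^k\|$. The differentiability of $\mathcal{F}_\theta,\mathcal{G}_\gamma$ needed throughout is guaranteed by the sigmoid activation, so the smooth block genuinely has a gradient rather than only a subgradient.
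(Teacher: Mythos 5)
Your proposal is correct and follows essentially the same route as the paper: the descent lemma on the smooth block $(\theta,\gamma,\phi)$ combined with the $\lambda$-strong convexity of the $w$-subproblem yields~\eqref{energydescent} with $c_0=\min\{1/\max_i\alpha_i-L_M/2,\ \lambda/2\}$, and the optimality condition $\vzero\in\partial_w E_{\LS}(x^{k+1})$ together with the gradient-update identity yields~\eqref{gradbound}. In fact you are more careful than the paper on the second bound: the paper asserts $\|\nabla_{\theta,\gamma,\phi}E_{\LS}(x^{k+1})\|=\|((\theta^k-\theta^{k+1})/\alpha_1,(\gamma^k-\gamma^{k+1})/\alpha_2,(\phi^k-\phi^{k+1})/\alpha_3)\|$ as an equality, whereas that vector equals the gradient at $x^k$ (with $w=w^k$), not at $x^{k+1}$; your insertion of the Lipschitz correction terms, including the auxiliary $w$-Lipschitz estimate for the coupling term $\frac{\lambda}{2}\|w-\nabla S(\phi)\|_2^2$ that Assumption~\ref{assum1} does not directly provide, is exactly what is needed to make that step rigorous (at the harmless cost of a larger constant $c_1$).
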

\begin{proof} 
	Let $ y^k = (\theta^k,\gamma^k,\phi^k) $. Applying  Lemma 3.2\,\cite{chen1993convergence} to \eqref{sub:w},  we know 
	\begin{equation*}
		\nu\|w^{k+1}\|_{1,2}+\frac{\lambda}{2}\|w^{k+1}-\nabla S(\phi^{k+1})\|^2\leq  \nu\|w\|_{1,2}+\frac{\lambda}{2}\|w-\nabla S(\phi^{k+1})\|^2 - \dfrac{\lambda}{2}\|w-w^{k+1}\|^2,\quad \forall~w,
	\end{equation*}
	which implies
	\begin{equation*}
		E_{\LS}(x^{k+1})\leq E_{\LS}(y^{k+1}, w) - \dfrac{\lambda}{2}\|w-w^{k+1}\|^2 \quad \forall~w.
	\end{equation*}
	Thus, one can obtain
	\begin{align*}
		E_{\LS}(x^{k+1}) &\leq E_{\LS}(y^{k+1},w^k) - \dfrac{\lambda}{2}\|w^k - w^{k+1}\|^2\\
		&\leq  E_{\LS}(x^k) + \langle \nabla_{\theta,\gamma,\phi} E_{\LS}(x^k), y^{k+1}-y^k\rangle + \dfrac{L_M}{2}\|y^{k+1}-y^k\|^2- \dfrac{\lambda}{2}\|w^k - w^{k+1}\|^2\\
		&= E_{\LS}(x^k) -\left(\dfrac{1}{\alpha_1}- \dfrac{L_M}{2}\right)\|\theta^k- \theta^{k+1}\|^2  - \left(\dfrac{1}{\alpha_2}-\dfrac{L_M}{2} \right )\|\gamma^k-\gamma^{k+1}\|^2\\
		&\quad  - \left(\dfrac{1}{\alpha_3} - \dfrac{L_M}{2}\right)\|\phi^k-\phi^{k+1}\|^2- \dfrac{\lambda}{2}\|w^k - w^{k+1}\|^2\\
		&\leq   E_{\LS}(x^k) - c_0\|x^k - x^{k+1}\|^2,
	\end{align*} 
	where $ c_0 := \min\left\{\left(\dfrac{1}{\max_i\alpha_i} - \dfrac{L_M}{2}\right), \dfrac{\lambda}{2}\right\} $, the second inequality and  the  equality hold due to \eqref{Lipschitz} and \eqref{update}, respectively. 
	
	By the fact that $ w^{k+1} = \arg\min_w E_{\LS}(y^{k+1}, w) $, it clear that $0\in\partial_w E_{\LS}(y^{k+1}, w^{k+1}) =  \partial_w E_{\LS}(x^{k+1}) $. Then, we get
	\begin{align*}
		\dist(\vzero, \partial E_{\LS}(x^{k+1})) &=\inf\{\|v\|:v\in\partial E_{\LS}(x^{k+1})\}  \\
		&\leq \inf\{\| \nabla_{\theta,\gamma,\phi} E_{\LS}(x^{k+1})\| + \|\omega\|: \omega\in\partial_wE_{\LS}(x^{k+1})\}\\
		&\leq \| \nabla_{\theta,\gamma,\phi} E_{\LS}(x^{k+1})\|\\
		& = \|((\theta^k - \theta^{k+1})/\alpha_1, (\gamma^k - \gamma^{k+1})/\alpha_2, (\phi^k- \phi^{k+1})/\alpha_3)\|\\
		&\leq c_1 \|x^k - x^{k+1}\|,
	\end{align*}
	where $ c_1 = 1/\min_i\alpha_i $.
\end{proof}

\begin{thm}\label{Thm:subconvergence}
	Suppose Assumption~\ref{assum1} holds.  Let $\{x^k\}=\{(\theta^k,\gamma^k,\phi^k,w^k)\}$ be the sequence generated by Algorithm~\ref{alg:VAE_seg}. Then, for any limit point $x^*$ of $\{x^k\}$, we have $\vzero\in\partial E_{\LS}(x^*)$.	
\end{thm}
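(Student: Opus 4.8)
The plan is to combine the two estimates of Proposition~\ref{pro:seqproperty} in the standard way for nonconvex block-coordinate schemes, following~\cite{bolte2014proximal}. First I would use the sufficient-decrease inequality \eqref{energydescent} to control the step lengths. Since $E_{\LS}$ is continuous (the smooth blocks are $C^1$ because the networks $\mathcal{F},\mathcal{G}$ use sigmoid activations, and the term $\nu\|w\|_{1,2}$ is continuous) and the whole sequence lies in the bounded set $\mathcal{M}$ from Assumption~\ref{assum1}, the values $E_{\LS}(x^k)$ are bounded below. By \eqref{energydescent} the sequence $\{E_{\LS}(x^k)\}$ is nonincreasing and hence convergent, so summing \eqref{energydescent} over $k$ telescopes to
\begin{equation*}
	c_0\sum_{k=0}^{\infty}\|x^k-x^{k+1}\|^2\leq E_{\LS}(x^0)-\lim_{k\to\infty}E_{\LS}(x^k)<\infty,
\end{equation*}
which forces $\|x^k-x^{k+1}\|\to 0$.

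Next I would pass to a limit point. Since $\{x^k\}\subset\mathcal{M}$ is bounded, any limit point $x^*$ is the limit of a subsequence $x^{k_j}\to x^*$; because $\|x^{k_j}-x^{k_j+1}\|\to 0$, the shifted subsequence satisfies $x^{k_j+1}\to x^*$ as well. Applying the relative-error bound \eqref{gradbound} along this subsequence gives
\begin{equation*}
	\dist(\vzero,\partial E_{\LS}(x^{k_j+1}))\leq c_1\|x^{k_j}-x^{k_j+1}\|\to 0,
\end{equation*}
so there exist subgradients $v^{k_j+1}\in\partial E_{\LS}(x^{k_j+1})$ with $v^{k_j+1}\to\vzero$.

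Finally I would invoke the closedness of the limiting subdifferential. Continuity of $E_{\LS}$ yields $E_{\LS}(x^{k_j+1})\to E_{\LS}(x^*)$, and together with $x^{k_j+1}\to x^*$ and $v^{k_j+1}\to\vzero$, the closedness of the graph of $\partial E_{\LS}$ gives $\vzero\in\partial E_{\LS}(x^*)$, which is the assertion. I expect the last step to be the main obstacle: one must ensure the subdifferential used is closed under the relevant convergence, which requires both that $E_{\LS}$ is continuous on $\mathcal{M}$ (so that function values, not merely arguments, converge along the subsequence, as the limiting subdifferential demands) and that the nonsmooth penalty $\nu\|w\|_{1,2}$ interacts correctly with the $C^1$ network terms under the sum rule. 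Once differentiability of the network-dependent blocks is secured from the choice of smooth activations, the remaining continuity and closedness verifications are routine.
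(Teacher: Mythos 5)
Your proposal is correct and follows essentially the same route as the paper: sufficient decrease plus boundedness gives summability of $\|x^k-x^{k+1}\|^2$ and hence vanishing successive differences, the relative-error bound \eqref{gradbound} produces vanishing subgradients along a subsequence, and function-value convergence then yields stationarity of the limit point. The only cosmetic difference is in the last step, where the paper obtains $E_{\LS}(x^{k_j})\to E_{\LS}(x^*)$ via lower semicontinuity combined with the optimality of the $w$-subproblem and then invokes convexity of $E_{\LS}$ in $w$, whereas you use the (valid, since $\nu\|w\|_{1,2}$ is continuous) direct continuity of $E_{\LS}$ together with closedness of the graph of the limiting subdifferential.
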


\begin{proof}
	By assumption, we know $\{x^k\}\subset\mathcal{M}$ and thus bounded. Then, the set of limit points of $\{x^k\}$ is nonempty. Let $ y = (\theta,\gamma, \phi) $. For any limit point $x^* = (y^*, w^*) $, there exist a subsequence $\{x^{k_j}\}$ such that $x^{k_j}\to x^*$ as $j\to\infty$. By Proposition~\ref{pro:seqproperty}, we know $\{E_{\LS}(x^k)\}$ is a decreasing sequence. Together with the fact that $E_{\LS}$ is bounded below, there exists some $\bar E$ such that $E_{\LS}(x^k)\to\bar E$ as $k\to\infty$. 
	Moverover, it has
	\begin{equation}
		E_{\LS}(x^0)-\bar E = \lim_{K\to\infty}\sum_{j=0}^{K}\left(E_{\LS}(x^j)-E_{\LS}(x^{j+1})\right)\geq c_0\lim_{K\to\infty}\sum_{j=0}^K\|x^j-x^{j+1}\|^2,
	\end{equation}
	and implies $\|x^k-x^{k-1}\|\to0$ as $k\to\infty$. 
	Together with \eqref{gradbound}, 
	it follows that there exists $v^{k_j}\in\partial_w E_{\LS}(x^{k_j})$ such that
	\begin{equation}\label{sublim}
		\lim_{j\to\infty}\|v^{k_j}\|=0.
	\end{equation}
	
	From \eqref{sub:w}, we know $ E(x^{k_j})\leq E(y^{k_j}, w) $ for any $ w $. Let $ w = w^* $ and $ j\to \infty  $, we get $ \limsup\limits_{j\to\infty} E_{\LS}(x^{k_j}) \leq E_{\LS}(x^*) $ as $E_{\LS}(y, w) $ is continuous with respect to $ y $. By the fact that $ E_{\LS}(x) $ is lower semi-continuous, it has $ \lim\limits_{j\to\infty}E_{\LS}(x^{k_j}) = E_{\LS}(x^*) $. Moreover, by the convexity of $E_{\LS}$ with respect to $ w $, we have
	\begin{equation}\label{convex}
		E_{\LS}(y^{k_j}, w) \geq E_{\LS}(x^{k_j}) + \langle v^{k_j}, w -w^{k_j}\rangle,\quad \forall~ v^{k_j}\in \partial_w E_{\LS}(x^{k_j}).
	\end{equation}
	Let $j\to\infty$ in~\eqref{convex} and using the fact that $w^{k_j}\to w^*$,  $E_{\LS}(x^{k_j})\to E_{\LS}(x^*)$ as $j\to\infty$ and~\eqref{sublim}, we get $ \vzero \in\partial_w E_{\LS}(x^*) $. Then, it follows $ \vzero\in\partial E_{\LS}(x^*) $. 
\end{proof}

Furthermore, the sub-sequence convergence can be strengthen by using the next proposition on $E_{\LS}$ which is known as the Kurdyka-Lojasiewicz (KL) property~\cite{bolte2014proximal}. In particular, we have Theorem 2.9 in~\cite{attouch2013convergence}:
\begin{thm}\label{klconvergence}
	Let $f:\mathbb{R}^n \to \mathbb{R}\cup\{+\infty\}$ be a proper lower semicontinuous function. Consider a sequence $(x^k)_{k\in \mathbb{N}}$ that satisfies, for $a,b$ are positive constants,
	
	\begin{itemize}
		\item[\noindent{\bf {H1.}}] (Sufficient decrease condition.) For each $k \in \mathbb{N}$, 
		\begin{equation}
			f(x^k) - f(x^{k+1}) \geq a\|x^{k+1} - x^{k}\|;
		\end{equation}
		\item[\noindent{\bf {H2.}}]  (Relative error condition) For each $k \in \mathbb{N}$, there exists $w^{k+1} \in \partial f(x^{k+1})$ such that
		\begin{equation}
			\|w^{k+1}\| \leq b\|x^{k+1} - x^{k}\|;
		\end{equation}
		\item[\noindent{\bf {H3.}}]  (Continuity condition). There exists a subsequence $(x^{k_j})_{j\in \mathbb{N}}$ and $\tilde{x}$ such that
		\begin{equation}
			x^{k_j} \to \tilde{x} \text{ } and \text{ } f(x^{k_j}) \to f(\tilde{x}), \quad as \text{ } j\to \infty.
		\end{equation}
	\end{itemize}
	If f has the Kurdyka-Lojasiewicz (KL) property at the cluster point $\tilde{x}$ specified in \noindent{\bf {H3}} then the sequence $(x^k)_{k \in \mathbb{N}}$converges to $\bar{x} = \tilde{x}$ as $k$ goes to infinity, and $\bar{x}$ is a critical point of $f$. Moreover the sequence $(x^k)_{k \in \mathbb{N}}$ has a finite length, i.e.,
	\begin{equation}
		\sum_{k=0}^{\infty} \|x^{k+1} - x^{k}\| < +\infty.
	\end{equation}
\end{thm}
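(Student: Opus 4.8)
This is the abstract Kurdyka--{\L}ojasiewicz (KL) convergence principle, so the plan is to reproduce the Attouch--Bolte--Svaiter argument, combining the three structural hypotheses with the KL inequality at the cluster point $\tilde x$. Writing $d_k:=\|x^{k+1}-x^k\|$ and $f^*:=f(\tilde x)$, I would first use \textbf{H1} to note that $f(x^k)-f(x^{k+1})\ge a\,\|x^{k+1}-x^k\|\ge 0$, so $\{f(x^k)\}$ is nonincreasing; since \textbf{H3} supplies a subsequence with $f(x^{k_j})\to f^*$, the full sequence satisfies $f(x^k)\downarrow f^*$. If $f(x^{k_0})=f^*$ for some $k_0$, then \textbf{H1} forces $x^k=x^{k_0}$ for all $k\ge k_0$ and the claim is trivial, so I may assume $f(x^k)>f^*$ for every $k$. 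The whole problem then reduces to proving the length bound $\sum_k d_k<\infty$: finite length makes $\{x^k\}$ a Cauchy sequence, hence convergent to a limit $\bar x$, and passing to the limit in \textbf{H2} together with the lower semicontinuity of $f$ gives $\vzero\in\partial f(\bar x)$; finally $\bar x=\tilde x$ because $\tilde x$ is a cluster point of the now-convergent sequence.

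The heart of the proof is the desingularization step. I would invoke the KL property at $\tilde x$ to obtain a neighborhood $U$, a constant $\eta>0$, and a concave desingularizing function $\varphi$ with $\varphi(0)=0$ and $\varphi'>0$, such that $\varphi'(f(x)-f^*)\,\dist(\vzero,\partial f(x))\ge 1$ on $U\cap\{f^*<f<f^*+\eta\}$. For indices $k$ with $x^k\in U$ and $0<f(x^k)-f^*<\eta$, combining this inequality with the relative-error bound $\dist(\vzero,\partial f(x^k))\le\|w^k\|\le b\,d_{k-1}$ coming from \textbf{H2} gives $\varphi'(f(x^k)-f^*)\ge 1/(b\,d_{k-1})$. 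Setting $\Delta_k:=\varphi(f(x^k)-f^*)-\varphi(f(x^{k+1})-f^*)$, the concavity of $\varphi$ yields $\Delta_k\ge\varphi'(f(x^k)-f^*)\,(f(x^k)-f(x^{k+1}))$, and inserting the sufficient-decrease estimate from \textbf{H1} converts the energy gap into a power of $d_k$. The outcome is a recursion of the schematic form $d_k^2\lesssim \Delta_k\,d_{k-1}$ that couples consecutive step sizes to the telescoping $\varphi$-increments.

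From here the length bound follows by summation. Applying the elementary inequality $2\sqrt{uv}\le u+v$ to $d_k^2\lesssim\Delta_k\,d_{k-1}$ produces an additive estimate of the form $2d_k\le d_{k-1}+\tfrac{b}{a}\Delta_k$; summing over $k$ from a fixed large index $K$ up to $N$, the $d_k$-terms on the two sides cancel up to boundary contributions, while $\sum_k\Delta_k$ telescopes to at most $\varphi(f(x^K)-f^*)$, which is finite because $f(x^k)\to f^*$ and $\varphi$ is continuous at $0$. This leaves $\sum_{k\ge K}d_k$ bounded uniformly in $N$, hence $\sum_k d_k<\infty$, and the convergence and criticality conclusions follow as described above.

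The step I expect to be the genuine obstacle is guaranteeing that the iterates remain in the neighborhood $U$ on which the KL inequality is available: the summation argument silently assumes $x^k\in U$, yet confinement to $U$ is exactly what the length bound should deliver. I would break this apparent circularity in the classical way, choosing the base index $K$ large enough that $x^K$ is close to $\tilde x$ (possible since $x^{k_j}\to\tilde x$) and that $\varphi(f(x^K)-f^*)$ is small (possible by continuity of $\varphi$ at $0$), and then running an induction in which the partial sums $\sum_{K\le k\le N}d_k$ are controlled by the same bound, so the accumulated displacement never pushes the iterates out of $U$. Once this invariant is maintained, every estimate above holds for all $k\ge K$ and the argument closes.
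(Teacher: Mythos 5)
The paper does not actually prove this statement: it is imported wholesale as Theorem~2.9 of Attouch--Bolte--Svaiter \cite{attouch2013convergence}, so there is no internal argument to measure yours against. What you have written is a faithful sketch of that source's proof --- monotonicity of $f(x^k)$ from \textbf{H1}, identification of the limit value via \textbf{H3}, desingularization with a concave $\varphi$ at the cluster point, the coupling $d_k^2\lesssim \Delta_k d_{k-1}$ from \textbf{H2} plus concavity, the AM--GM summation with telescoping $\varphi$-increments, and the induction that keeps the iterates inside the KL neighborhood $U$ --- and you correctly single out the confinement-to-$U$ induction as the only genuinely delicate step. Two small points. First, your closing step ``passing to the limit in \textbf{H2} together with lower semicontinuity gives $\vzero\in\partial f(\bar x)$'' really needs the closedness of the graph of the limiting subdifferential, which requires $f(x^k)\to f(\bar x)$; this follows from monotonicity together with \textbf{H3}, not from lower semicontinuity alone, but that is cosmetic in a sketch. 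Second, and more substantively: as transcribed in this paper, \textbf{H1} reads $f(x^k)-f(x^{k+1})\geq a\|x^{k+1}-x^k\|$ \emph{without} a square, under which the finite-length conclusion is trivial by telescoping ($\sum_k\|x^{k+1}-x^k\|\leq (f(x^0)-f^*)/a$, no KL property needed), whereas your recursion $d_k^2\lesssim\Delta_k d_{k-1}$ tacitly uses the correct original hypothesis $f(x^k)-f(x^{k+1})\geq a\|x^{k+1}-x^k\|^2$. The squared form is what the source states and what the paper actually verifies in \eqref{energydescent} (and the squared bound does not imply the unsquared one), so the statement here contains a typo and your proof is the right one for the theorem as it is actually used.
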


From the Proposition 2 in~\cite{zeng2019global}, the objective function $E_{\LS}$ in \eqref{relax:loss} satisfies the KL property at $x^*$. Moreover, $E_{\LS}(x)$ and $\{x^k\}$ satisfy \noindent{\bf {H1}}, \noindent{\bf {H2}}, and \noindent{\bf {H3}} from Proposition~\ref{pro:seqproperty} and Theorem~\ref{Thm:subconvergence} and the continuity of $E_{\LS}$. Thus, from Theorem~\ref{klconvergence} and Theorem~\ref{Thm:subconvergence}, we have the sequence$\{x^k\}$ converges to $x^*$, which is a stationary point of $E_{\LS}$.

\subsection{Variational inference for dataset based segmentation}\label{app:vi_dataset}
We give the dataset based segmentation loss function from the perspective of variational inference. Assume image $I$ has two latent variables: the latent image $Z$ and the segmentation mask $u$. Then the data likelihood has
\begin{equation}\label{elbo_dataset_1}
\begin{aligned}
\ln p(I) &\geq E_{q(Z, u \mid I)} \ln\left(\frac{ p(I,Z,u)}{q(Z,u\mid I)}\right) = \underbrace{E_{q(Z, u \mid I)} \ln p(I \mid Z, u)- \mathrm{KL} (q(Z, u \mid I) \| p(Z, u))}_{\mathrm{ELBO}},
\end{aligned}
\end{equation}
and the the second term in~\eqref{elbo_dataset_1} is 
\begin{equation}\label{elbo_dataset_2}
\begin{aligned}
    \mathrm{KL} (q(Z, u \mid I) \| p(Z, u)) =& \int \int q(Z, u \mid I) \ln \frac{q(Z, u \mid I)}{p(Z, u)} dZ du \\
    =& \int \int q(u \mid I) q(Z \mid I, u)  \ln \frac{q(u \mid I) q(Z \mid I, u)}{p(Z\mid u) p(u)} dZ du \\
    =& \int q(u \mid I) \ln \frac{q(u \mid I)}{p(u)} \left( \int q(Z \mid I, u)dZ \right) du \\
    &+\int q(u \mid I) \left( \int q(Z \mid I, u) \ln \frac{q(Z \mid I, u)}{p(Z \mid u)} dZ \right) du \\
    =& \mathrm{KL} (q(u \mid I) \| p(u))  + E_{q(u \mid I)} \mathrm{KL} (q(Z \mid I, u) \| p(Z \mid u)).
\end{aligned}
\end{equation}
Combing \eqref{elbo_dataset_1} with \eqref{elbo_dataset_2}, we obtain the ELBO term as:
\begin{equation}\label{elbo_dataset}
    \mathrm{ELBO} = E_{q(Z, u \mid I)} \ln p(I \mid Z, u)- \mathrm{KL} (q(u \mid I) \| p(u)) - E_{q(u \mid I)} \mathrm{KL} (q(Z \mid I, u) \| p(Z \mid u)) .
\end{equation}
Compared to the ELBO defined in~\eqref{ELBO}, the additional term $q(u\mid I)$ is imposed for get the segmentation function $\mathcal{U}$ from variational inference methods. More specifically, we choose $q(u\mid I)$ as
\begin{equation}\label{inference_mask}
    q(u \mid I)=\delta(\mathcal{U}(I)),
\end{equation}
where $\delta$ is the Delta distribution. For other distributions in~\eqref{elbo_dataset}, recall from section~\ref{deep_cv}, we adopt the Gaussian hypothesis in the latent space:
\begin{equation}\label{latent_Gaussian}
	p(Z\mid u)=\prod_{x\in\Omega}p(Z_x\mid u_x), \quad 
	 p(Z_x\mid u_x) = 
    \begin{cases}
    \mathcal N(\mu_1, \Sigma_1), \mbox{ if } u_x = 1, \\
    \mathcal N(\mu_2, \Sigma_2), \mbox{ if } u_x = 0,
    \end{cases}
\end{equation} 
and 
\begin{equation}\label{inference_generate}
\begin{aligned}
&p(I \mid Z, u)=p(I \mid Z)=\mathcal{N}(\mathcal{F}(Z), \mathbf{I}), \\
&q(Z\mid I, u) = \prod_{x\in\Omega} q(Z_x\mid I) = \prod_{x\in\Omega} \mathcal{N}(\mathcal{G}^{\mu}(I)_x,\mathcal{G}^{\sigma}(I)_x).
\end{aligned}
\end{equation}
As before, we choose 
\begin{equation}\label{prior}
    p(u) \propto \exp (-\mathcal{R}(u)),
\end{equation}
where $\mathcal{R}$ denotes some regularizations of the segmentation mask. Now, with all these assumptions, we have
\begin{prop}\label{dataset_prop}
    Suppose the latent variable $Z$ satisfies the Gaussian hypothesis in~\eqref{latent_Gaussian}, $q(u \mid I)$ satisfies~\eqref{inference_mask}, and $p(I \mid Z, u)$, $q(Z \mid I, u)$ satisfy~\eqref{inference_generate}, then the ELBO in~\eqref{elbo_dataset} is equal to
    \begin{equation}
		-\underbrace{\frac{1}{2} E_{\eta}\|\mathcal{F}(\mathcal{G}^{\mu}(I)+ \sqrt{\mathcal{G}^{\sigma}(I)} \eta)-I\|_{2}^{2}}_{\mathrm{Reconstruction}}
		- \underbrace{\sum_{x\in \Omega} \mathcal{U}(I)_{x} \mathrm{KL}_{x}^{\Omega_1} - \left(1-\mathcal{U}(I)_{x}\right) \mathrm{KL}_{x}^{\Omega_2}}_{\mathrm{KL}} - \underbrace{\mathcal{R}(\mathcal{U}(I))}_{\mathrm{Regularization}} + c,
    \end{equation}
    where $\eta \sim \mathcal{N}(0,\mathbf{I})$, $c$ is a constant, and
    \begin{equation}
        \mathrm{KL}_{x}^{\Omega_i} = \frac{1}{2} \left(\ln \frac{|\Sigma_i|}{|\mathcal{G}^{\sigma}(I)_x|} - d + \operatorname{tr} (\Sigma_i^{-1}\mathcal{G}^{\sigma}(I)_x) + (\mathcal{G}^{\mu}(I)_x - \mu_i)^{T} \Sigma_i^{-1} (\mathcal{G}^{\mu}(I)_x - \mu_i)\right), i=1,2.
    \end{equation}
\end{prop}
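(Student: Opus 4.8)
The plan is to evaluate the three constituent terms of the ELBO in~\eqref{elbo_dataset} one at a time, exploiting the fact that $q(u\mid I)=\delta(\mathcal{U}(I))$ collapses every expectation over $u$ to an evaluation at the deterministic value $u=\mathcal{U}(I)$. Factoring $q(Z,u\mid I)=q(u\mid I)\,q(Z\mid I,u)$, the three summands become (i) the reconstruction term $E_{q(Z\mid I,\mathcal{U}(I))}\ln p(I\mid Z)$, (ii) the prior term $-\mathrm{KL}(\delta(\mathcal{U}(I))\|p(u))$, and (iii) the latent term $-\mathrm{KL}(q(Z\mid I,\mathcal{U}(I))\|p(Z\mid \mathcal{U}(I)))$. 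I would then show that (i) and (iii) reduce to computations already carried out in the proof of Proposition~\ref{MainProp}, while (ii) produces the regularization $\mathcal{R}(\mathcal{U}(I))$ up to a constant.

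First I would treat term (i). Since $p(I\mid Z,u)=p(I\mid Z)=\mathcal{N}(\mathcal{F}(Z),\mathbf{I})$ and $q(Z\mid I,u)=\prod_{x}\mathcal{N}(\mathcal{G}^{\mu}(I)_x,\mathcal{G}^{\sigma}(I)_x)$, the reparameterization $Z=\mathcal{G}^{\mu}(I)+\sqrt{\mathcal{G}^{\sigma}(I)}\eta$ with $\eta\sim\mathcal{N}(0,\mathbf{I})$ turns the log-likelihood into a squared error, giving $-\tfrac12 E_{\eta}\|\mathcal{F}(\mathcal{G}^{\mu}(I)+\sqrt{\mathcal{G}^{\sigma}(I)}\eta)-I\|^2$ up to the additive constant $-\ln\sqrt{2\pi}nm$. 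This is verbatim the first line of the proof of Proposition~\ref{MainProp}, so I would simply cite it. Term (iii) is likewise the per-pixel Gaussian KL already computed there: the product structure of $q(Z\mid I,u)$ and $p(Z\mid u)$ splits the divergence into $\sum_{x}\mathrm{KL}(\mathcal{N}(\mathcal{G}^{\mu}(I)_x,\mathcal{G}^{\sigma}(I)_x)\|\mathcal{N}(\mu_{i},\Sigma_{i}))$, and the closed form for the KL between two Gaussians yields $\mathrm{KL}_x^{\Omega_i}$ with $i$ selected by $u_x=\mathcal{U}(I)_x$. Writing the selection as the convex weighting $\mathcal{U}(I)_x\mathrm{KL}_x^{\Omega_1}+(1-\mathcal{U}(I)_x)\mathrm{KL}_x^{\Omega_2}$, the natural relaxation used in the single-image case, and summing over $x\in\Omega$ gives the stated KL term.

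The main obstacle is term (ii), the KL divergence of a Dirac mass against the prior, which I would handle formally. Expanding $\mathrm{KL}(\delta(\mathcal{U}(I))\|p(u))$ as the difference between a cross-entropy and the differential entropy of the point mass, only the cross-entropy $-\ln p(\mathcal{U}(I))$ depends on $\mathcal{U}(I)$; since $p(u)\propto\exp(-\mathcal{R}(u))$ we have $-\ln p(\mathcal{U}(I))=\mathcal{R}(\mathcal{U}(I))$ plus the log normalizing constant of $p$. The entropy of the point mass, though ill-defined as a genuine integral, is independent of $\mathcal{U}(I)$ and is absorbed, together with that normalizing constant and the $-\ln\sqrt{2\pi}nm$ from term (i), into the single constant $c$. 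Collecting the three evaluated terms then reproduces exactly the claimed expression. The only nonrigorous point is this formal treatment of the delta distribution, which is the standard device in deterministic-encoder variational inference, and I would flag it as such rather than attempt a measure-theoretic justification.
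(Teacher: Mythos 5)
Your proposal is correct and follows essentially the same route as the paper's proof: the same three-term decomposition of the ELBO, the same reparameterization argument for the reconstruction term, and the same reduction of the latent KL to the per-pixel Gaussian formula already derived for Proposition~\ref{MainProp}. The only difference is your formal treatment of the Dirac entropy in term (ii); the paper sidesteps the issue you flag by observing that $u\in\{0,1\}^{n\times m}$ is discrete, so $q(u\mid I)=\delta(\mathcal{U}(I))$ is a point mass on a finite set whose Shannon entropy is exactly zero, and no measure-theoretic caveat is needed.
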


\begin{proof}
    There are three terms in~\eqref{elbo_dataset}. For the first term:
    \begin{equation}
        E_{q(Z, u \mid I)} \ln p(I \mid Z, u) = E_{q(Z, u \mid I)} \ln p(I \mid Z) = E_{q(Z \mid I)} \ln p(I \mid Z),
    \end{equation}
    since we assume $p(I \mid Z, u)=p(I \mid Z)$ and $q(Z \mid I, u) = q(Z \mid I)$ in~\eqref{inference_generate}. Then we have
    \begin{equation}
        E_{q(Z \mid I)} \ln p(I \mid Z) = - \frac{1}{2} E_{\eta}\|\mathcal{F}(\mathcal{G}^{\mu}(I)+ \sqrt{\mathcal{G}^{\sigma}(I)} \eta)-I\|_{2}^{2} + c.
    \end{equation}
    where $c$ is a constant.
    For the second term:
    \begin{equation}
        \mathrm{KL} (q(u \mid I) \| p(u)) = \int q(u \mid I) \ln{\frac{q(u\mid I)}{p(u)}} d u = - H(q(u \mid I)) - E_{q(u\mid I)} \ln p(u),
    \end{equation}
    where $H(q(u \mid I))$ is the entropy of $q(u \mid I)$. Since $q(u\mid I) = \delta(\mathcal{U}(I))$ and $u\in\{0,1\}^{n\times m}$, thus $q(u\mid I)$ is a discretized distribution, so we have $H(q(u \mid I)) = H(\delta(\mathcal{U}(I)) = 0$. Futhermore, we have
    \begin{equation}
        E_{q(u\mid I)} \ln p(u) = - \mathcal{R}(\mathcal{U}(I)) - \tilde{c}
    \end{equation}
    since $p(u) \propto \exp (-\mathcal{R}(u))$, and $\tilde{c}$ is a constant. so 
    \begin{equation}
        \mathrm{KL} (q(u \mid I) \| p(u)) = \mathcal{R}(\mathcal{U}(I)) + \tilde{c}.
    \end{equation}
    For the third term, we have
    \begin{equation}
    \begin{aligned}
        E_{q(u \mid I)} \mathrm{KL} (q(Z \mid I, u) \| p(Z \mid u)) &= \mathrm{KL} (q(Z \mid I) \| p(Z \mid \mathcal{U}(I))) \\
        &= \sum_{x\in \Omega} \mathcal{U}(I)_{x} \mathrm{KL}_{x}^{\Omega_1} - \left(1-\mathcal{U}(I)_{x}\right) \mathrm{KL}_{x}^{\Omega_2}
    \end{aligned}
    \end{equation}
    where the last equation is the same as the derivation of Proposition 3.1. So the proposition holds.
\end{proof}
We can obtain the objective function for our dataset segmentation by minimizing the negative ELBO in Proposition~\ref{dataset_prop}, which is the same as the loss function we proposed in~\eqref{Energy_Functional_dataset}.

\end{document}